\def\isarxiv{1} 
\definecolor{b2}{RGB}{51,153,255}
\definecolor{mygreen}{RGB}{80,180,0}
\definecolor{yl}{RGB}{255,80,0}
\definecolor{myl}{RGB}{180,80,20}
\definecolor{kl}{RGB}{180,0,20}
\newtheorem{theorem}{Theorem}[section]
\newtheorem{definition}[theorem]{Definition}
\newtheorem{corollary}[theorem]{Corollary}
\newtheorem{hypothesis}[theorem]{Hypothesis}
\renewcommand{\hat}{\widehat}
\renewcommand{\bar}{\overline}
\newcommand{\SAT}{{$\mathsf{3SAT}$}}
\newcommand{\ETH}{{$\mathsf{ETH}$}}
\newcommand{\MAXSAT}{{$\mathsf{MAX3SAT}$}}
\newcommand{\MAXSATB}{{$\mathsf{MAX3SAT}(B)$}}
\newcommand{\MAXESAT}{{$\mathsf{MAXE3SAT}$}}
\newcommand{\MAXESATB}{{$\mathsf{MAXE3SAT(B)}$}}
\newcommand{\NP}{{$\mathsf{NP}$}}
\newcommand{\RP}{{$\mathsf{RP}$}}
\newcommand{\CNF}{{$\mathsf{CNF}$}}
\newcommand{\mixcon}{{$\mathsf{MixCon}$}}
\newcommand{\unicon}{{$\mathsf{UniCon}$}}
\DeclareMathOperator{\poly}{poly}
\DeclareMathOperator{\R}{{\mathbb R}}
\newcommand{\dist}{\mathrm{dist}}%
\newcommand{\Binghui}[1]{{\color{myl}[Binghui: #1]}}
\newcommand{\Xiaoxiao}[1]{{\color{b2}[Xiaoxiao: #1]}}
\title{MixCon: Adjusting the Separability of Data Representations for Harder Data Recovery}
\date{}
\author{
Xiaoxiao Li\thanks{\texttt{xl32@princeton.edu}. Princeton University.}
\and
Yangsibo Huang\thanks{\texttt{yangsibo@princeton.edu}. Princeton University.}
\and
Binghui Peng\thanks{\texttt{bp2601@columbia.edu}. Columbia University.}
\and
Zhao Song\thanks{\texttt{zhaos@princeton.edu}. Columbia University and Princeton University.}
\and
Kai Li\thanks{\texttt{li@cs.princeton.edu}. Princeton University}
}
\begin{document}
\ifdefined\isarxiv

\begin{titlepage}
\maketitle
\begin{abstract}


To address the issue that deep neural networks (DNNs) are vulnerable to model inversion attacks, we design an objective function, which adjusts the separability of the hidden data representations, as a way to control the trade-off between data utility and vulnerability to inversion attacks. Our method is motivated by the theoretical insights of data separability in neural networking training and results on the hardness of model inversion. Empirically, by adjusting the separability of data representation, we show that there exist sweet-spots for data separability such 
that it is difficult to recover data during inference while maintaining data utility. 


 

%
\end{abstract}
\thispagestyle{empty}
\end{titlepage}

\else

\maketitle
\vspace{-15pt}
\begin{abstract}
\vspace{-10pt}

\end{abstract}
\vspace{-10pt}
\fi

\ifdefined\isarxiv\else\vspace{-2mm}\fi
\section{Introduction}
\ifdefined\isarxiv\else\vspace{-2mm}\fi
Over the past decade, deep neural networks have shown superior performances in various domains, such as visual recognition, natural language processing, robotics, and healthcare.
However, recent studies have demonstrated that machine learning models are vulnerable in terms of leaking private data~\cite{he2019model,zlh19, zhang2020secret}. 
Hence, preventing private data from being recovered by malicious attackers has become an important research direction in deep learning research. 


Distributed machine learning \cite{shokri2015privacy,kairouz2019advances} has emerged as an attractive setting to mitigate privacy leakage without requiring clients to share raw data. In the case of an edge-cloud distributed learning scenario, most layers  are commonly offloaded to the
cloud, while the edge device computes only a small number
of convolutional layers for feature extraction, due to power
and resource constraints \cite{kang2017neurosurgeon}. For example, service provider trains and splits a neural network at a ``cut layer,'' then deploys the rest of the layers to clients~\cite{vepakomma2018split}. Clients encode their dataset using those layers, then send the data representations back to cloud server using the rest of layers for inference \cite{teerapittayanon2017distributed,ko2018edge, vepakomma2018split}. This gives an untrusted cloud provider or a malicious participant a chance to steal sensitive inference data from the output of ``cut layer'' on the edge device side, i.e. inverting data from their outputs \cite{fredrikson2015model,zhang2020secret}.

In this paper, we investigate how to design a hard-to-invert \textit{data representation} function (or \textit{hidden data representation} function), which is defined as the output of the neural network's intermediate layer. We focus on defending data recovery during inference. The goal is to hide sensitive information and to protect data representations from being used to reconstruct the original data while ensuring that the resulted data representations are still informative enough for decision making. The core question here is how to achieve the goal.

We propose {\em data separability}, also known as the minimum (relative) distance between (the representation of) two data points, as a new criterion to investigate and understand the trade-off between data utility and hardness of data recovery. Recent theoretical studies show that if data points are separable in the hidden embedding space of a DNN model, it is helpful for the model to achieve good classification accuracy \cite{als19a}. However, larger separability is also easier to recover inputs. Conversely, if the embeddings are non-separable or sometimes overlap with one another, it is challenging to recover inputs. Nevertheless, the model may not be able to learn to achieve good performance.  
Two main questions arise. First, is there an effective way to adjust the separability of data representations?
Second, are there ``sweet spots'' that make the data representations difficult for inversion attacks while achieving good accuracy? 

This paper aims to answer these two questions by learning a feature extractor that can adjust the separability of data representations embedded by a few neural network layers. Specifically, we propose to add a self-supervised learning-based novel regularization term to the standard loss function during training. 
We conduct experiments on both synthetic and benchmark datasets to demonstrate that with specific parameters, such a learned neural network is indeed difficult to recover input data while maintaining data utility.

Our contributions can be summarized as: 
\begin{itemize}
    \item To the best of our knowledge, this is the first proposal to investigate the trade-off between data utility and data recoverability from the angle of data representation separability;
    \item
    We propose a simple yet effective loss term, Consistency Loss -- {\mixcon} for adjusting data separability;
    \item
    We provide the theoretical-guided insights of our method, including a new exponential lower bound on approximately solving the network inversion problem, based on the Exponential Time Hypothesis ({\ETH}); and
    \item
    We report experimental results comparing accuracy and data inversion results with/without incorporating {\mixcon}. We show {\mixcon} with suitable parameters makes data recovery difficult while preserving high data utility. 
\end{itemize}

The rest of the paper is organized as follow. We formalize our problem in Section~\ref{sec:pre}. In Section~\ref{sec:consistancy}, we present our theoretical insight and introduce the consistency loss. We demonstrate the experiment results in Section~\ref{sec:experiment}. 
We defer the technical proof and experiment details to Appendix.

\ifdefined\isarxiv\else\vspace{-2mm}\fi
\section{Preliminary}
\label{sec:pre}
\paragraph{Distributed learning framework.}
We consider a distributed learning framework, in which {\em users} and {\em servers} collaboratively perform inferences \cite{teerapittayanon2017distributed,ko2018edge,kang2017neurosurgeon}. We have the following assumptions: 1) Datasets are stored at the user sides. During inference, no raw data are ever shared among users and servers; 2) Users and servers use a split model~\cite{vepakomma2018split} where users encode their data using our proposed mechanism to extract data representations at a cut layer of a trained DNN. Servers take encoded data representations as inputs and compute outputs using the layers after the cut layer in the distributed learning setting; 3) DNN used in the distributed learning setting can be regularized by our loss function (defined later).


\paragraph{Threat model.} We consider the attack model with access to the shared hidden data representations during the client-cloud communication process. The attacker aims to recover user data  (i.e., pixel-wise recovery for images in vision task). To quantify the upper bound of privacy leakage under this threat model, we allow the attacker to have more power in our evaluation. In addition to having access to extracted features, we allow the attacker to see all network parameters of the trained model. 

\paragraph{Problem formulation.}
We focus on the pipelines combining local data representation learning with a global model learning manner at a high level. 
Formally, let $h:\R^{d} \rightarrow \R^{m}$ denote the local feature extractor function, which maps an input data $x \in \R^{d}$ to its feature representation $h(x) \in \R^{m}$. The local feature extractor is a shallow neural network in our setting.
The deep neural network on the server side is denoted as $g : \R^m \mapsto \R^C$, which performs classification tasks and maps the feature representation to one of $C$ target classes.
The overall neural network $f:  \R^d \mapsto \R^C $, and it can be written as $f = g \circ h$.

Our overall objectives are:
\ifdefined\isarxiv\else\vspace{-0.2cm}\fi
\begin{itemize}
\ifdefined\isarxiv\else\itemsep-0.1em\fi
\item Learn the feature representation mechanism (i.e. $h$ function) that safeguards information from unsolicited disclosure.
\item Jointly learn the classification function $g$, and the feature extraction function $h$ to ensure the information extracted is useful for high-performance downstream tasks.
\end{itemize}


\ifdefined\isarxiv\else\vspace{-2mm}\fi
\section{Consistency Loss for Adjusting Data Separability}
\label{sec:consistancy}
\ifdefined\isarxiv\else\vspace{-2mm}\fi
To address the issue of data recovery from hidden layer output, we propose a novel consistency loss in neural network training, as shown in Figure \ref{fig:theory}. Consistency loss is applied to the feature extractor $h$ to encourage encoding closed but separable representations for the data of different classes. Thus, the feature extractor $h$ can help protect original data from being inverted by an attacker during inference while achieving desirable accuracy. 
\begin{figure}[htbp]
    \centering
    \includegraphics[width=0.9\textwidth]{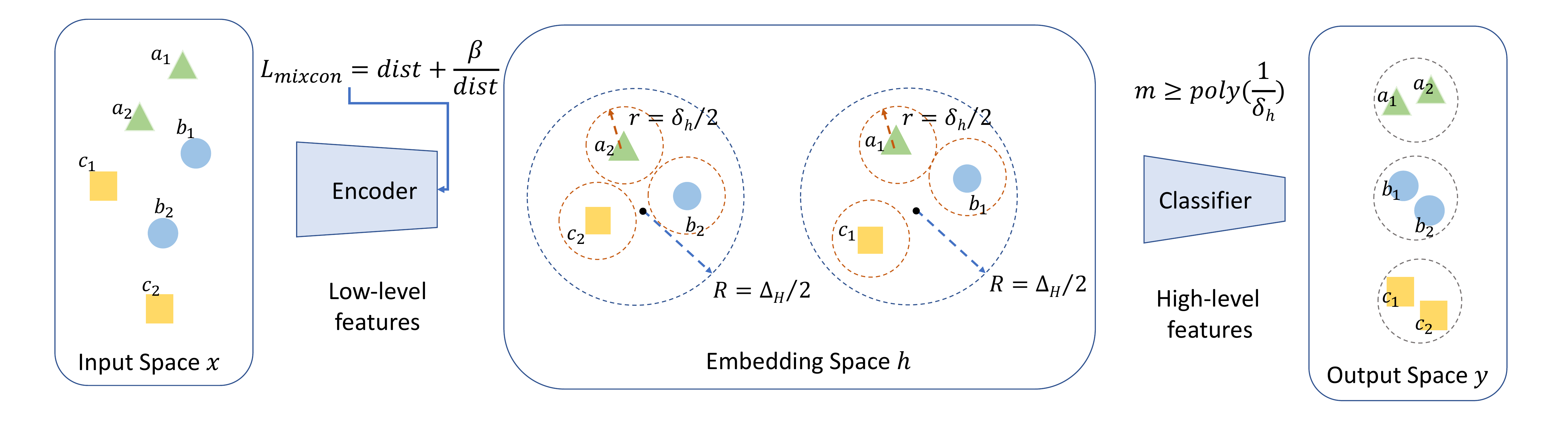}
    \caption{\small Schematic diagram of our data representation encoding scheme in deep learning pipeline. We show a simple toy example of classifying data points of triangles, squares, and circles. In embedding space (the middle block), data representations from different classes are constrained to a small ball with diameter $\Delta_H$, while they are separate from each other at least with distance $\delta_h$.  }
\label{fig:theory}
\end{figure}

\ifdefined\isarxiv\else\vspace{-2mm}\fi
\subsection{Data separation as a guiding tool}
\ifdefined\isarxiv\else\vspace{-2mm}\fi
Our intuition is to adjust the information in the data representations to a minimum such that downstream classification tasks can achieve good accuracy but not enough for data recovery through model inversion attacks \cite{he2019model}.
The question is, what is the right measure on the amount of information for successful classification and data security?
We propose to use data separability as the measure. This intuition is motivated by the theoretical results of deep learning. In particular, 
\begin{itemize}
    \ifdefined\isarxiv\else\vspace{-0.2cm}\fi
    \item Over-parameterized deep learning theory --- the well separated data requires a narrower network to train,
    \item In-approximation theory --- the worse separability of the data, the harder of the inversion problem.
\end{itemize}
\begin{definition}[Data separability]
Let $\delta_h$ denote the separability of hidden layer over all pairwise inputs $x_1, x_2, \cdots, x_n \in \R^d$, i.e.,
\begin{align*}
    \delta_h := \min_{i \neq j \in [n]} \| h(x_i) - h(x_j) \|_2 . ~~~\mathrm{(controlling~accuracy)} 
\end{align*}
Let $S$ denote a set of pairs that supposed to be close in hidden layer.
Let $\Delta_H$ denote the maximum distance with respect to that set $S$ 
\begin{align*}
    \Delta_H:= \max_{(i,j) \in S} \| h(x_i) - h(x_j) \|_2. ~~~\mathrm{(controlling~invertibility)}
\end{align*}
\end{definition}

\paragraph{Lower bound on data separability implies better accuracy}
Recent line of deep learning theory \cite{als19a} indicates that data separability is perhaps the only matter fact for learnability (at least for overparameterized neural network), leading into the following results.
\begin{theorem}
\label{th:sep}
Suppose the training data points are separable, i.e., $\delta_{h} > 0$.
If the width of a $L$-layer neural network with ReLU gates satisfies $m \geq \poly(n,d,L,1/\delta_h)$, initializing from a random weight matrix $W$, (stochastic) gradient descent algorithm can find the global minimum of neural network function $f$.
\end{theorem}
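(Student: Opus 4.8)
The plan is to follow the over-parameterization (neural tangent kernel) analysis of \cite{als19a}. The starting point is to fix the squared training loss $L(W) = \frac{1}{2}\sum_{i=1}^n \|f(x_i) - y_i\|_2^2$ and to study the trajectory of the weights under (stochastic) gradient descent. The governing object is the Gram matrix of the network's output gradients at the random initialization $W_0$; the separability assumption $\delta_h > 0$ is exactly what guarantees this matrix has a strictly positive least eigenvalue $\lambda_0 = \lambda_0(\delta_h,n) > 0$, which in turn drives a geometric decrease of the loss. So the first step I would carry out is to lower bound $\lambda_0$ in terms of $\delta_h$ and $n$, by showing that well-separated inputs induce near-linearly-independent gradient features after a random ReLU layer.

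The second step is to establish the two perturbation estimates that make the over-parameterized regime work. (i) A \emph{forward/backward stability} bound: if the current weights $W$ lie within a small ball around $W_0$, then for width $m \ge \poly(n,d,L,1/\delta_h)$ every hidden representation and every backpropagated gradient stays close to its value at initialization with high probability, so the instantaneous Gram matrix remains positive definite with least eigenvalue at least $\lambda_0/2$. (ii) A \emph{semi-smoothness} bound controlling how many ReLU activation patterns flip along a gradient step, which bounds the gap between the true loss decrease and its first-order prediction. Both rely on concentration of the Gaussian initialization and on the fact that a ReLU unit changes sign for only a small fraction of neurons when its input moves slightly.

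With these in hand, the third step is the convergence argument itself: a one-step inequality of the form $L(W_{t+1}) \le (1 - \eta\lambda_0/4)\, L(W_t)$, obtained by combining the positive-definite Gram lower bound (which forces a large gradient norm whenever the loss is nonzero, a gradient-domination condition) with the semi-smoothness upper bound on the second-order error. Iterating this inequality yields linear convergence to a global minimum, and one closes the loop by verifying that the total distance travelled, $\sum_t \eta \|\nabla L(W_t)\|$, stays inside the ball assumed in step (ii); this is where the polynomial width requirement is finally pinned down. For the stochastic variant, one argues in addition that the minibatch gradient is an unbiased estimator whose variance is controlled in the same small ball, so the descent inequality continues to hold in expectation.

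The main obstacle is the coupling in step (ii) across the $L$ layers: perturbations of the weights can compound geometrically in depth, and the forward activations are statistically dependent on the backward gradients, so naive layer-by-layer union bounds are far too lossy. Controlling this requires tracking the number of flipped activation patterns simultaneously with the magnitude of the representation change, and showing both remain $\poly(n,d,L,1/\delta_h)$-small throughout training. This is the technical heart of \cite{als19a} and the step I expect to be most delicate.
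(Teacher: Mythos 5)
The paper does not actually prove Theorem~\ref{th:sep}: it is imported wholesale from \cite{als19a} as a known result motivating the use of separability, and no argument for it appears in the main text or the appendix (the appendix only proves the hardness results, Theorem~\ref{thm:hardness} and Corollary~\ref{cor:eth}). So there is no in-paper proof to compare yours against; the only fair comparison is with the cited source.

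Judged on that basis, your outline is a correct and reasonably faithful account of the over-parameterization convergence argument, with one point worth flagging. You frame the key quantity as the least eigenvalue of the Gram matrix of output gradients at initialization, which is the neural-tangent-kernel route of Du et al.\ and its successors; \cite{als19a} itself does not go through an NTK Gram matrix but instead proves a direct gradient lower bound of Polyak--{\L}ojasiewicz type, $\|\nabla L(W)\|_F^2 \gtrsim \lambda \cdot L(W)$, where the separability $\delta_h$ enters by guaranteeing that for each training point one can find a descent direction in the last hidden layer that moves its output without significantly disturbing the others. The two routes are close cousins (the paper's related-work section even cites the result $\delta \geq \lambda/n^2$ connecting the two quantities), and both require exactly the two perturbation lemmas you identify --- stability of activations/gradients in a small ball around initialization, and semi-smoothness controlling flipped ReLU patterns --- followed by the same geometric-decay recursion and the self-consistency check that the iterates never leave the ball. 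Your identification of the depth-$L$ coupling in the perturbation analysis as the technical heart is also accurate. So the proposal is sound as a proof plan; it just attributes to \cite{als19a} a Gram-matrix formulation that more properly belongs to a parallel line of work, which is a presentational rather than a mathematical issue.
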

Essentially, the above theorem indicates that we can (provably) find global minimum of a neural network given well separated data, and better separable data points requires narrower neural network and less running time.

\paragraph{Upper bound on data separability implies hardness of inversion.}
When all data representation is close to each other, i.e. $\Delta_{H}$ is sufficiently small, we expect the inversion problem is hard. We support this intuition by proving that the neural network inversion problem is hard to {\em approximate} within some constant factor when assuming {\NP}$\neq${\RP}.\footnote{The class {\RP} consists of all languages $L$ that have a polynomial-time randomized algorithm $A$ with the following behavior: If $x\notin L$, then A always rejects $x$ (with probability 1). If $x\in L$, then A accepts $x$ in L with probability at least $1/2$.}

Existing work~\cite{ljdd19} indicates that the decision version of the neural network inversion problem is {\NP}-hard. However, this is insufficient since it is usually easy to find an approximate solution, which could leak much information on the original data. It is an open question whether the approximation version is also challenging.
We strengthen the hardness result and show that by assuming {\NP}$\neq${\RP}, it is hard to recovery an input that approximates the hidden layer representation. 
Our hardness result implies that given hidden representations are close to each other, no polynomial time can distinguish their input. Therefore, it is impossible to recover the real input data in polynomial time.

\begin{theorem}[Informal]
\label{thm:rp}
Assume {\NP}$\neq${\RP}, there is no polynomial time algorithm that is able to give a constant approximation to thee neural network inversion problem.
\end{theorem}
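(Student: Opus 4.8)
The plan is to establish inapproximability through a gap-producing reduction from a problem whose approximation is already known to be hard, namely \MAXESAT (equivalently \MAXSAT), for which Håstad's PCP-based result makes it NP-hard to distinguish satisfiable instances from those in which at most a $7/8+\epsilon$ fraction of the clauses can be simultaneously satisfied. The goal is to encode such an instance $\phi$ into a ReLU network $f_\phi$ together with a target vector $y$, so that the \emph{optimal} inversion objective (the smallest achievable $\|f_\phi(x)-y\|$ over inputs $x$) is multiplicatively separated between the two cases, with a gap bounded below by a fixed absolute constant. A hypothetical constant-factor approximation algorithm would then have to land on the correct side of this gap, and hence decide $\phi$.

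First I would fix the optimization form of the inversion problem: given $f$ and $y$, minimize $\|f(x)-y\|_2$ over $x$, where a $c$-approximation returns an $x$ whose objective is at most $c$ times the optimum. Next I would build $f_\phi$ in three stages. An input stage assigns one coordinate to each Boolean variable and uses a small ReLU gadget to penalize any coordinate that strays from $\{0,1\}$, so that near-integral assignments are favored. A middle stage realizes, for each clause, a constant-size ReLU circuit computing a $\{0,1\}$-valued indicator of whether the clause is satisfied; since a 3-clause is falsified by exactly one of the eight patterns of its literals, this indicator is a threshold of affine forms and is implementable with $O(1)$ gates. A final stage sums the clause indicators together with the Boolean-penalty terms into a single output coordinate, and I would set $y$ to the value attained precisely when all $m$ clauses are satisfied.

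With this encoding, a satisfying assignment yields optimum $0$ (or a value bounded by a small constant $a$ coming from residual penalty terms), whereas in the far-from-satisfiable case every input leaves a constant fraction $\Omega(m)$ of clauses unsatisfied, forcing the objective above some $b$ with $b/a$ bounded below by an absolute constant. A $c$-approximation with $c$ smaller than this ratio therefore distinguishes the two cases. Because the PCP reduction contributes randomness (and the Boolean-enforcement gadget can itself be set up probabilistically so that the combinatorial gap survives the continuous relaxation), composing the resulting randomized, one-sided-error reduction with a deterministic polynomial-time $c$-approximation gives a polynomial-time randomized decision procedure for an NP-complete problem, i.e. $\mathsf{NP}\subseteq\mathsf{RP}$, contradicting the hypothesis.

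The main obstacle is the soundness of the gadget against \emph{real-valued} inputs: since $x$ ranges over $\R^d$ rather than $\{0,1\}^n$, I must guarantee that no fractional assignment can exploit the ReLU nonlinearities to beat the best integral assignment, so that the \MAXESAT gap transfers faithfully into a multiplicative gap in the continuous objective. Jointly tuning the Boolean-penalty weights and the clause-indicator scaling to secure this rounding robustness while keeping the network size polynomial is the crux. A secondary technical point is verifying that the randomness enters with the correct one-sided error, so that the conclusion is the collapse $\mathsf{NP}=\mathsf{RP}$ rather than merely $\mathsf{NP}=\mathsf{P}$.
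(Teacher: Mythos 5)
Your high-level plan --- a gap reduction from a Håstad-type inapproximable satisfiability problem, encoding clauses as ReLU gadgets and an integrality penalty, then arguing that a constant-factor inverter would decide the gap --- is exactly the paper's strategy, and you correctly identify the crux: soundness against real-valued inputs. But you leave that crux unresolved, and it is precisely where the paper's proof does its work. The danger is that a single variable $x_i$ nudged slightly off $\{-1,1\}$ can perturb \emph{every} clause indicator it appears in while paying only one penalty term, so with unbounded variable degrees the continuous adversary can beat every integral assignment. The paper closes this by (i) reducing from the bounded-occurrence problem {\MAXESATB} (Theorem~\ref{thm:3satb}, which is also where the {\RP} in the hypothesis actually comes from --- Trevisan's reduction is randomized, not Håstad's), so each variable touches at most $B$ clauses, and (ii) replicating the integrality penalty $100B^{2}$ times per variable in the output layer, so that after sign-rounding the error $2B\sum_i\Delta_i$ incurred across clauses is dominated by the $100B^{2}\sum_i\Delta_i^{2}$ penalty up to an additive $n/100\leq 3m/100$, preserving the $7/8+5/\sqrt{B}$ gap. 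Without the degree bound and the $B$-dependent penalty weighting, "jointly tuning the weights" as you propose does not go through.

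A second, smaller but real issue is your approximation notion. You define a $c$-approximation multiplicatively relative to the optimum, but in the completeness case the optimum is $0$, so a multiplicative guarantee degenerates to exactness and the resulting statement collapses to the already-known {\NP}-hardness of exact inversion~\cite{ljdd19}. The paper instead uses an additive, normalized notion (Definition~\ref{def:approx2}: find $x'$ with $\|h(x')-z\|_2\leq\epsilon\sqrt{m_2}$), and the hardness is that of distinguishing ``an exact preimage exists'' from ``any point within additive error $\frac{1}{60B}\sqrt{m_2}$ yields a too-good {\MAXESATB} assignment.'' Your summing of all clause indicators into a single output coordinate also loses the $\sqrt{m_2}$ normalization that makes ``constant approximation'' meaningful per coordinate; keeping one output neuron per clause (plus the penalty copies), as the paper does, is what makes the final statement the intended one.
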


The above result only rules out the polynomial running time recovery algorithm but leaves out the possibility of a subexponential time algorithm.
To further strengthen the result, we assume the well-known Exponential Time Hypothesis ({\ETH}), which is widely accepted in the computation complexity community.

\begin{hypothesis}[Exponential Time Hypothesis ({\ETH}) \cite{ipz98}]
There is a $\delta>0$ such that the {\SAT} problem cannot be solved in $O(2^{\delta n})$ time.
\end{hypothesis}

Assuming {\ETH}, we derive an exponential lower bound on approximately recovering the input.
\begin{corollary}[Informal]
\label{cor:eth}
Assume $\mathsf{ETH}$, there is no $2^{o(n^{1 - o(1)})}$ time algorithm that is able to give a constant approximation to neural network inversion problem.
\end{corollary}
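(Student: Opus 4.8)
The plan is to bootstrap the corollary from the gap-introducing reduction underlying Theorem~\ref{thm:rp}, trading the coarse {\NP}-hardness bookkeeping for a careful tracking of instance sizes so that {\ETH} upgrades the merely super-polynomial lower bound into an exponential one. The proof of Theorem~\ref{thm:rp} turns a {\SAT} formula $\phi$ into a neural-network inversion instance whose achievable recovery distance exhibits a fixed multiplicative gap between the ``satisfiable'' and ``unsatisfiable'' cases, so that any constant-factor approximation algorithm must in fact decide satisfiability. The whole argument therefore reduces to showing that this pipeline inflates the instance size by at most a subpolynomial factor, which lets the exponent in the time bound survive as $n^{1 - o(1)}$.

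I would proceed in the following order. \textbf{Step 1 (sparsification).} Invoke the sparsification lemma of~\cite{ipz98}: under {\ETH} it suffices to preclude $2^{o(k)}$-time algorithms for {\SAT} instances on $k$ variables with $O(k)$ clauses, so the formula has description length $\Theta(k)$. \textbf{Step 2 (near-linear PCP).} Apply a nearly-linear-size probabilistically checkable proof (e.g.\ a Moshkovitz--Raz or Dinur-style construction) to convert $\phi$ into a constant-gap {\MAXSAT}-type instance of size $N = k^{1+o(1)}$, on which separating the satisfiable case from the $(1-\varepsilon)$-satisfiable case still requires $2^{\Omega(k)}$ time. \textbf{Step 3 (gap-preserving reduction).} Feed this gap instance into the reduction of Theorem~\ref{thm:rp}; since it encodes each variable and each clause by a bounded gadget of network weights, the reduction is gap-preserving and increases the size only near-linearly, producing an inversion instance of size $n = N^{1+o(1)} = k^{1+o(1)}$. \textbf{Step 4 (chaining).} Because the size grows only near-linearly throughout, the sparsified {\SAT} parameter obeys $k = n^{1 - o(1)}$ in terms of the final inversion-instance size $n$; hence the {\ETH} lower bound of $2^{\Omega(k)}$ for deciding the gap problem becomes a $2^{\Omega(n^{1 - o(1)})}$ lower bound for constant-factor inversion, so no $2^{o(n^{1 - o(1)})}$-time approximation algorithm can exist.

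The crux, and the step I expect to be the main obstacle, is controlling the compound size blowup across Steps~2 and~3 so that the exponent survives as $n^{1 - o(1)}$. This hinges on a PCP whose size is genuinely near-linear in $k$: a merely polynomial-size PCP of size $k^c$ would already degrade the bound to $2^{o(n^{1/c})}$, losing the sharp $1 - o(1)$ exponent. It also requires the Theorem~\ref{thm:rp} gadgets to be local enough that each of the $\Theta(k)$ variables and clauses contributes only $k^{o(1)}$ to the final network description, so that the two subpolynomial factors compose to another subpolynomial factor rather than a genuine polynomial one. A secondary subtlety is that Theorem~\ref{thm:rp} is stated under {\NP}$\neq${\RP}, suggesting a randomized reduction; if the gap amplification or gadget construction uses randomness, one should pass to the randomized variant of {\ETH}, but the size-accounting skeleton above is unchanged.
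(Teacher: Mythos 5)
Your proposal is correct and matches the paper's route: the paper obtains the corollary by combining the {\ETH}-hardness of gap-{\MAXESATB} (its Theorem~\ref{thm:eth-3sat}, which packages exactly your sparsification and nearly-linear-size PCP steps via the cited H{\aa}stad/Trevisan/Moshkovitz--Raz results) with the same linear-size, gap-preserving reduction used to prove Theorem~\ref{thm:hardness}, where $d=n$, $m_1, m_2 = m + O(B^2 n)$, and $m = O(Bn)$, so the exponent survives as $n^{1-o(1)}$. The only difference is presentational: you rebuild the {\ETH}-to-gap-SAT pipeline from first principles, while the paper invokes it as a black box and omits the (identical) chaining argument.
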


\ifdefined\isarxiv\else\vspace{-2mm}\fi
\subsection{Consistency loss --- {\mixcon}}
\ifdefined\isarxiv\else\vspace{-2mm}\fi
Follow the above intuitions, we propose a novel loss term {\mixcon} loss --- ${\cal L}_{\mathrm{mixcon}}$ --- to incorporate in training. {\mixcon} adjusts data separability by forcing the consistency of hidden data representations from different classes. 
This additional loss term balances the data separability, punishing feature representations that are too far or too close to each other. Noting that we choose to mix data from different classes instead of the data within a class, in order to bring more confusion in embedding space and potentially hiding data label information\footnote{We show the comparison in Appendix \ref{sec:unicon}.}. 

\paragraph{{\mixcon} loss ${\cal L}_{\mathrm{mixcon}}$:}  We add consistency penalties to force the data representation of $i$-th data in different classes to be similar, while without any overlapping for any two data points. 
    \begin{align}
    \label{eq:mixcon}
         {\cal L} _{ \mathrm{mixcon} } :=  ~ \frac{1}{ p } \frac{1}{ |{\cal C}| \cdot (|{\cal C}| - 1) } \sum_{i=1}^{ p }  \sum_{c_1 \in {\cal C}} \sum_{c_2 \in {\cal C}} ( \dist(i,c_1,c_2) + \beta/\dist(i,c_1,c_2) ).
    \end{align}
A practical choice for the pairwise distance is $ \dist(i,c_1,c_2) = \| h( x_{i,c_1} ) - h( x_{i,c_2} ) \|_2^2 $ \footnote{In practice, we normalize $\|h(x)\|_2$ to 1. To avoid division by zero, we can use a positive small $\epsilon~(\ll 1)$ and threshold distance to the range of $[\epsilon,~1/\epsilon]$.}, where $x_{i,c}$ is the $i$-th input data point in class $c$, $p:=\min_{c\in {\cal C}} |c|$, and $\beta>0$ balances the data separability. The first term punishes large distance while the second term enforces sufficient data separability. In general, we could replace $(\dist + \beta/\dist)$ by convex functions with asymptote shape on non-negative domain, that is, function with value reaches infinity on both ends of $[0,\infty)$.

We consider the classification loss 
\begin{align}
\label{eq:celoss}
{\cal L}_{\mathrm{class}} := -\sum_{i=1}^N \sum_{c=1}^{C} y_{i,c} \cdot \log(\hat{y}_{i,c}) ~~~\text{(cross~entropy)}
\end{align}  
where $y_{i} \in \R^C$ is the one-hot representation of true label and $\hat{y}_{i}=f(x_i) \in \R^C$ is the prediction score of data $i \in \{1,\dots,N\}$. The final objective function is ${\cal L} := {\cal L}_{\mathrm{class}} + \lambda \cdot {\cal L}_{\mathrm{mixcon}} $. We simultaneously train $h$ and $g$, where $\lambda$ and $\beta$ are tunable hyper-parameters associated with consistency loss regularization to adjust separability. We discuss the effect of $\lambda$ and $\beta$  in experiments (Section \ref{sec:experiment}).

\section{Experimental Results}
\label{sec:experiment}


\subsection{Data recovery model} To empirically evaluate the quality of inversion, we formally define the white-box data recovery (inversion) model \cite{he2019model} used in our experiments. The model aims to solve an optimization problem in the input space.  Given a representation $z=h(x)$, and a public function $h$ (the trained network that generates data representations), the inversion model tries to find the original input $x$:
\begin{equation} \label{eq:attack-exp}
      x^* = \arg\min_{s} {\cal L} ( h(s) , z) + \alpha \cdot {\cal R}(s)  
\end{equation}
where ${\cal L}$ is the loss function that measures the similarity between $h(s)$ and $z$, and ${\cal R}$ is the regularization term. We specify ${\cal L}$ and ${\cal R}$ used in each experiment later. 
We solve Eq. \eqref{eq:attack-exp} by iterative gradient descent.

\subsection{Experiments with synthetic data} In this section, we want to answer the following questions:
\begin{enumerate}
    \item[Q1] What is the impact of having $\beta$ in Eq.\eqref{eq:mixcon} to bound the smallest data pairwise distance?
    \item[Q2] Is feature encoded with {\mixcon} mechanism harder to invert?  
\end{enumerate}

To allow precise manipulation and straightforward visualization for data separability, our experiments use generated synthetic data with a 4-layer fully-connected network.


\paragraph{Network, data generation and training.} 
We defined the network as
\begin{align*}
y = q ( \text{softmax} ( f(x) ) ), ~~~ f(x) = W_4 \cdot \sigma(W_3 \cdot \sigma(W_2 \cdot (\sigma(W_1x+b_1))+b_2)+b_3)+b_4
\end{align*}
$x \in \R^{10},~W_1 \in \R^{500 \times 10},~W_2 \in \R^{2 \times 500}, ~W_3 \in \R^{100 \times 2}, ~W_4 \in \R^{2 \times 100}, ~b_1 \in \R^{500},~b_2\in\R^{2},~b_3\in\R^{100},~b_4\in\R^{2}$. For a vector $z$, we use $q(z)$ to denote the index $i$ such that $|z_i| > |z_j|$, $\forall j \neq i$.  We initialize each entry of $W_k$ and $b_k$ from ${\cal N}(u_k,1)$, where $u_k~\sim{\cal N}(0,\alpha)$ and $k \in \{1,2,3,4\}$. 

We generate synthetic samples $(x,y)$ from two multivariate normal distribution. Positive data are sampled from ${\cal N}(0,I)$, and negative data are sampled from ${\cal N}(-1,I)$, ending up with 800 training samples and 200 testing samples, where the covariance matrix $I$ is an identity diagonal matrix. ${\cal L}_{\mathrm{mixcon}}$ is applied to the 2nd fully-connected layer. 

We train the network for 20 epochs with cross-entropy loss and SGD optimizer with $0.1$ learning rate. We apply noise to the labels by randomly flipping $5\%$ of labels to increase training difficulty.



\paragraph{Testing setup.}
We compare the results under the following settings: 
\begin{itemize}
    \item Vanilla: training using only ${\cal L}_{\mathrm{class}}$.
    \item {\mixcon}: training with {\mixcon} loss with parameters ($\lambda,~\beta$)\footnote{$\lambda$ is the coefficient of penalty and  $~\beta$ is balancing term for data separability.}.
\end{itemize} 
We perform model inversion using Eq.~\eqref{eq:attack-exp} without any regularization term ${\cal R}(x)$ and ${\cal L}$ is the $\ell_1$-loss function. Detailed optimization process is listed in Appendix~\ref{app:invert1}. 


\begin{figure*}[!t]
    \captionsetup[subfigure]{justification=centering}
    \centering
    \subfloat[\small Vanilla $E=0$]{\includegraphics[width=0.165\linewidth]{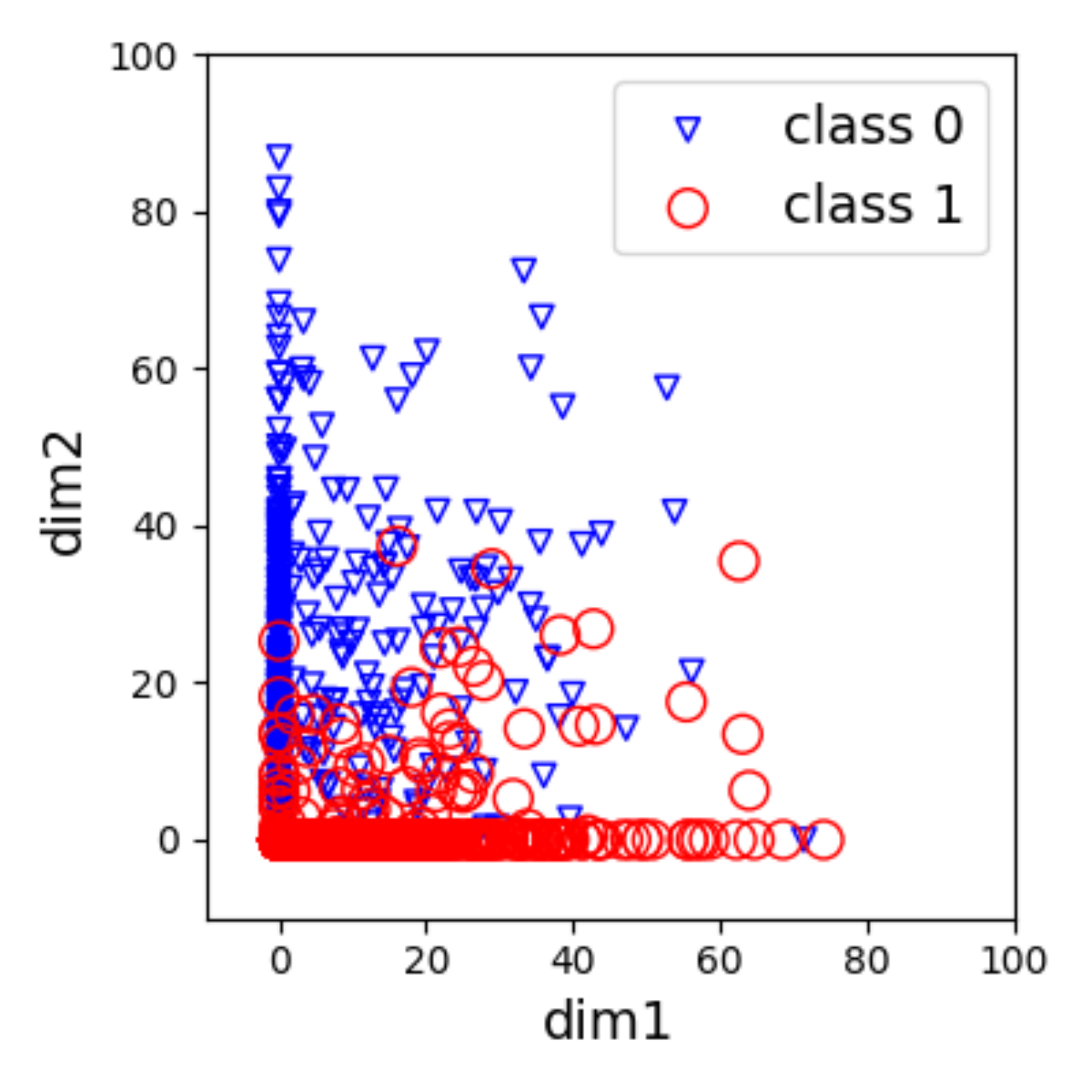}}
    \subfloat[\small Vanilla $E=20$]{\includegraphics[width=0.16\linewidth]{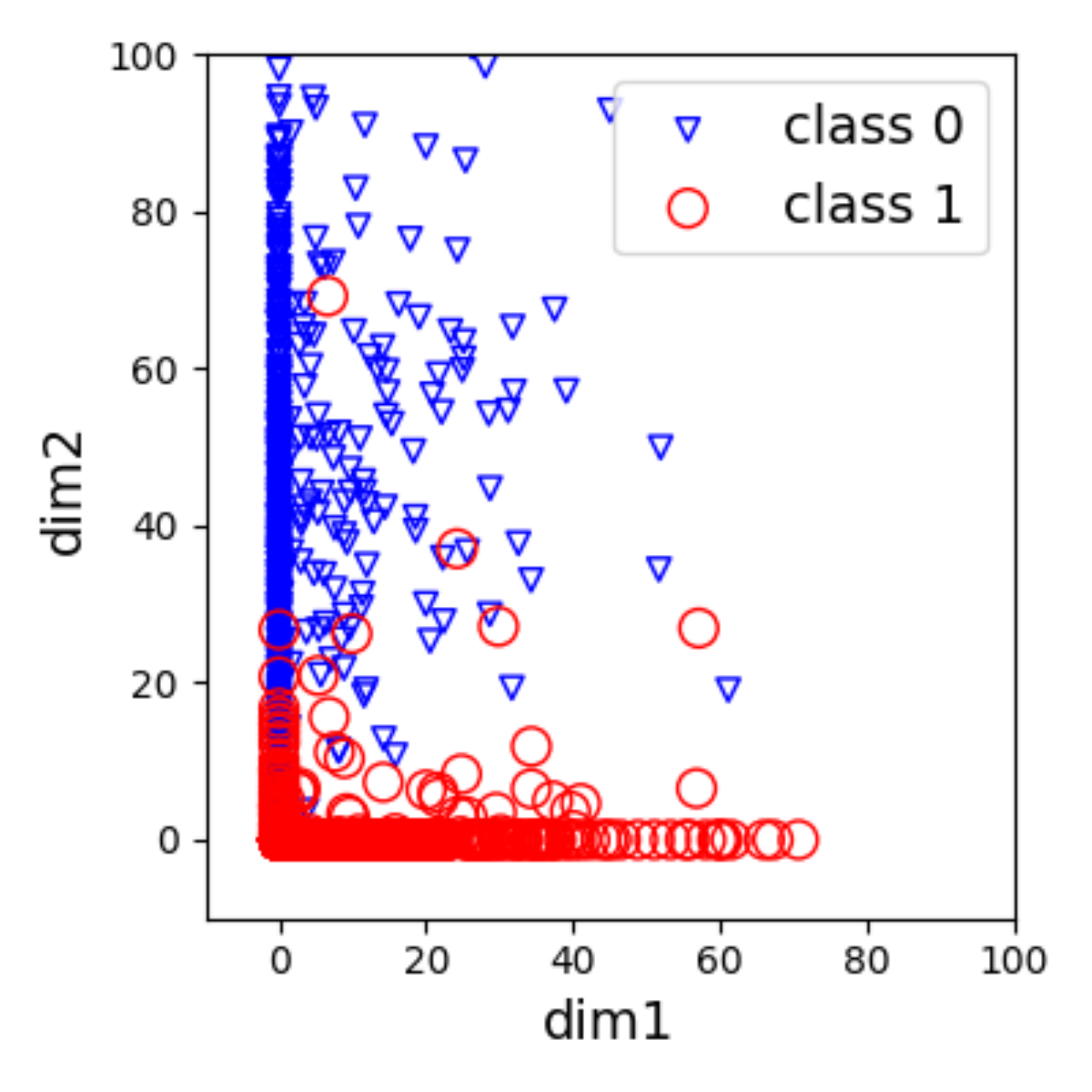}} 
    \subfloat[\small {\mixcon} $\beta=0.01$ $E=0$ ]{\includegraphics[width=0.165\linewidth]{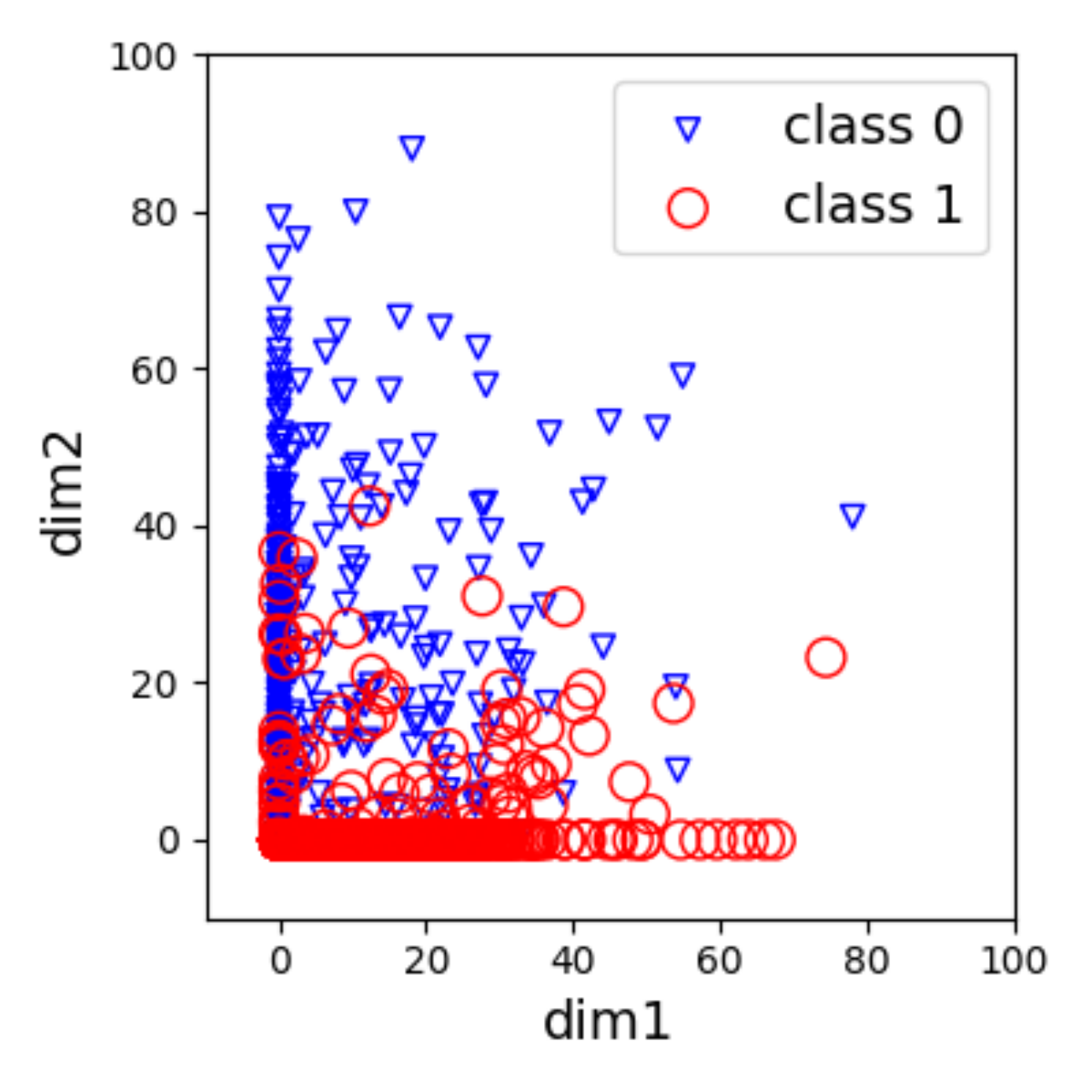}}
    \subfloat[\small {\mixcon} $\beta=0.01~E=20$]{\includegraphics[width=0.165\linewidth]{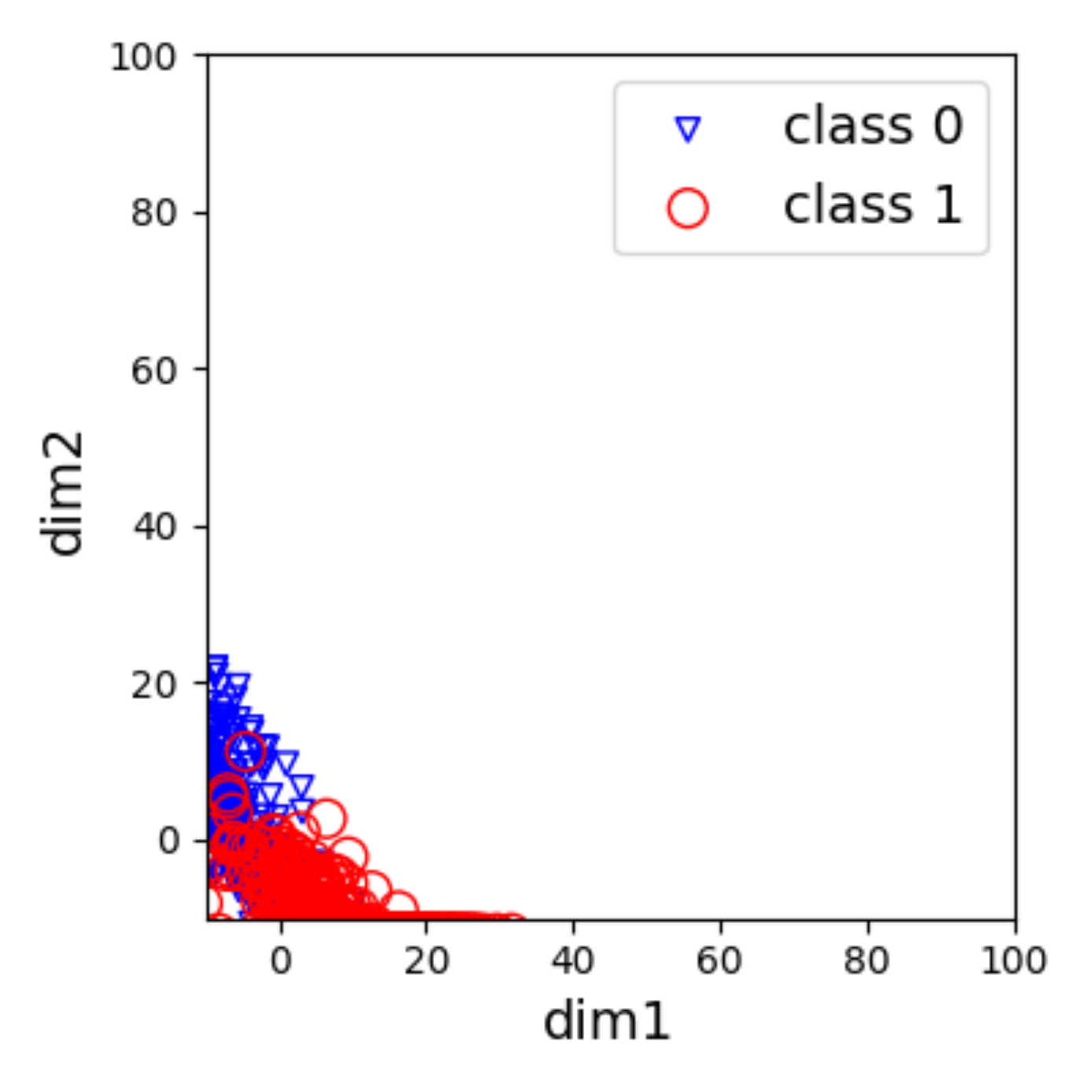}} 
    \subfloat[\small {\mixcon} $\beta=0$  $E=0$]{\includegraphics[width=0.165\linewidth]{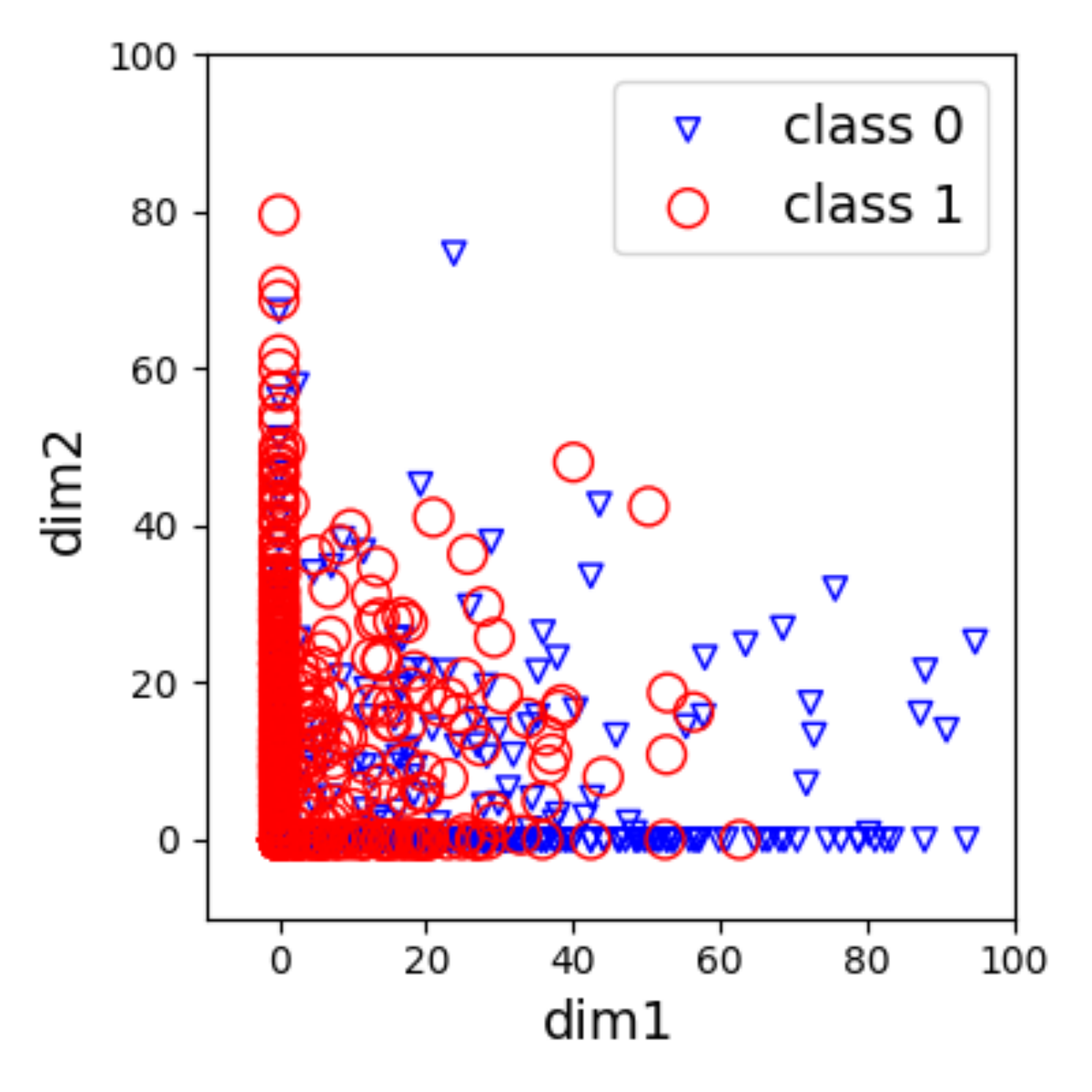}}
    \subfloat[\small {\mixcon} $\beta=0$ $E=20$]{\includegraphics[width=0.165\linewidth]{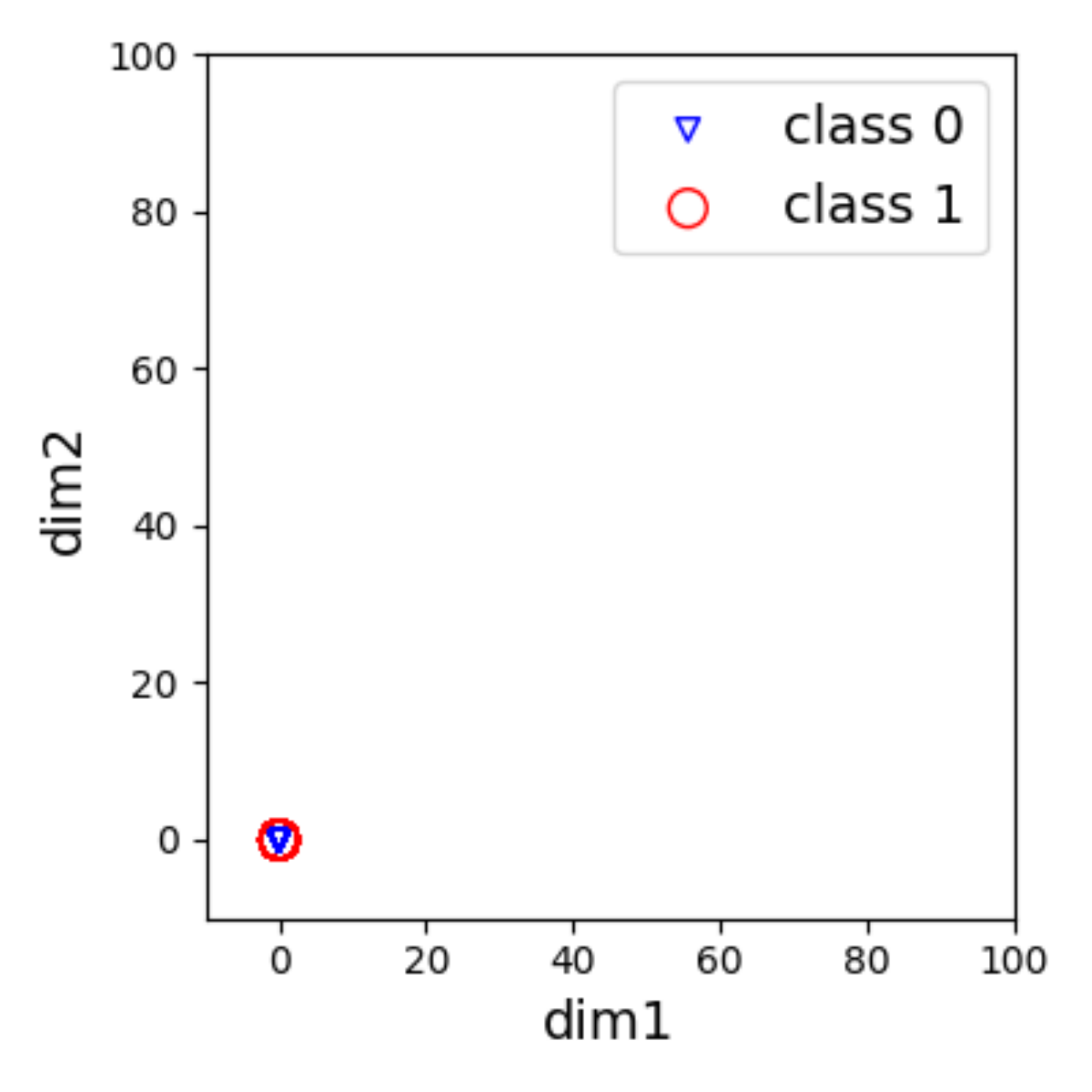}}
    \caption{\small Data hidden representation $h(x) \in \R^2$ from the 2nd fully-connected layer of synthetic data at different epoch ($E$).  Two settings of {\mixcon} are given default $\lambda=0.1$ but have different $\beta$. 
    Compare to Vanilla, {\mixcon} squeezes data representations to a smaller space over training. When $\beta=0$, {\mixcon} map all data to $h(x)=(0,0)$, which is not learnable.}
    \label{fig:syn_2d}
\end{figure*}


\paragraph{Results.} To answer Q1, we visualize the change of data representations at initial and ending epochs in Figure \ref{fig:syn_2d}. First, in vanilla training (Figure \ref{fig:syn_2d} a-b), data are dispersively distributed and enlarge their distance after training. The obvious difference for {\mixcon} training (Figure \ref{fig:syn_2d} c-f) is that data representations become more and more gathering through training. Second, we direct the data utility results of Vanilla and two ``default'' {\mixcon} settings -- ($\lambda = 0.1, \beta=0.01$) and ($\lambda = 0.1, \beta=0$) to Table~\ref{tab:table1}. When $\beta=0$, {\mixcon} achieves chance accuracy only as it encodes all the $h(x)$ to hidden space (0,0) (Figure \ref{fig:syn_2d} f). While having $\beta>0$ balancing the separability, {\mixcon} achieves similar accuracy as Vanilla.

\begin{table}[t]
\centering
\resizebox{0.9\textwidth}{!}{%
\begin{tabular}{l|c|ccc|ccc}
\hline
 & \multirow{2}{*}{Vanilla} & \multicolumn{3}{c|}{{\mixcon} $\beta = 0.01$} & \multicolumn{3}{c}{{\mixcon} $\beta = 0$} \\
 &  & default & deeper net & wider net & default & deeper net & wider net \\ \hline
Train Accuracy ($\%$) & 91.5 & 88.9 & 89.5 & 91.5 & 50.0 & 50.0 & 50.0 \\
Test Accuracy ($\%$) & 91.5 & 88.0 & 88.5 & 90.5 & 50.0  & 50.0 & 50.0 \\ \hline
\end{tabular}%
}
\caption{\small Data utility (accuracy). Vanilla is equivalent to ($\lambda = 0 ,~\beta = 0 $). Two {\mixcon} ``default'' settings both use $\lambda =0.1 $ but vary in $\beta=0.01$ and $\beta=0$. ``Deeper''/``Wider'' indicate increasing the depth / width of layers in the network on server side $g(x)$.}
\label{tab:table1}
\end{table}

Based on Theorem \ref{th:sep}, we further present two strategies to ensure reasonable accuracy while comprise of reducing data separability by increasing the depth or the width of the layers $g(z)$, the network after the layer that is applied ${\cal L}_{\mathrm{mixcon}}$. 
In practice, we add two more fully-connected layers with 100 neurons after the 3nd layer for ``deeper'' $g(x)$, and change the number of neurons on the 3nd layer to 2048 for ``wider'' $g(x)$. We show the utility results in Table \ref{tab:table1}. Using deeper or wider $g(z)$, {\mixcon} ($\lambda = 0.1, \beta=0.01$)  improves accuracy. Whereas {\mixcon} ($\lambda = 0.1, \beta=0$) fails, because zero data separability is not learnable no matter how $g(z)$ changes. This gives conformable answer that $\beta$ is an important factor to guarantee neural network to be trainable. 

\begin{wraptable}{r}{7cm}
\centering
\resizebox{0.33\textwidth}{!}{%
\begin{tabular}{l|ccc}
\hline
 & \multirow{2}{*}{Vanilla} & {\mixcon} & {\mixcon} \\
 &  & (0.1, 0.01) & (0.1, 0) \\ \hline
MSE &1.92 & 2.08  & 2.35 \\
MCS & 0.169 & 0.118 & 0.161\\ \hline
\end{tabular}%
}
\caption{\small Inversion results on synthetic dataset. Higher MSE or lower MCS indicates a worse inversion. $(\lambda, \beta)$ denoted in header.}\label{tab:table2}
\end{wraptable}
To answer Q2, we evaluate the quality of data recovery using the inversion model. We use both mean-square error (MSE) and mean-cosine similarity (MCS) of $x$ and $x^*$ to evaluate the data recovery accuracy. We show the quantitative inversion results in Table \ref{tab:table2}. Higher MSE or lower MCS indicates a worse inversion.
Apparently, data representation from {\mixcon} trained network is more difficult to recover compared to Vanilla strategy.

\ifdefined\isarxiv\else\vspace{-4mm}\fi
\subsection{Experiments with benchmark datasets}
\ifdefined\isarxiv\else\vspace{-2mm}\fi
In this section, we would like to answer the following questions:
\begin{itemize}
    \item[Q3] How does {\mixcon} loss affect data separability and accuracy on image datasets?
    \item[Q4] Are there parameters ($\lambda, \beta$) in {\mixcon} (Eq.~\eqref{eq:mixcon}) to reach a ``sweet-spot'' for data utility and the quality of defending data recovery?
\end{itemize}
\begin{figure*}[!t]
    \centering
    \subfloat[MNIST]{\includegraphics[width=0.8\linewidth]{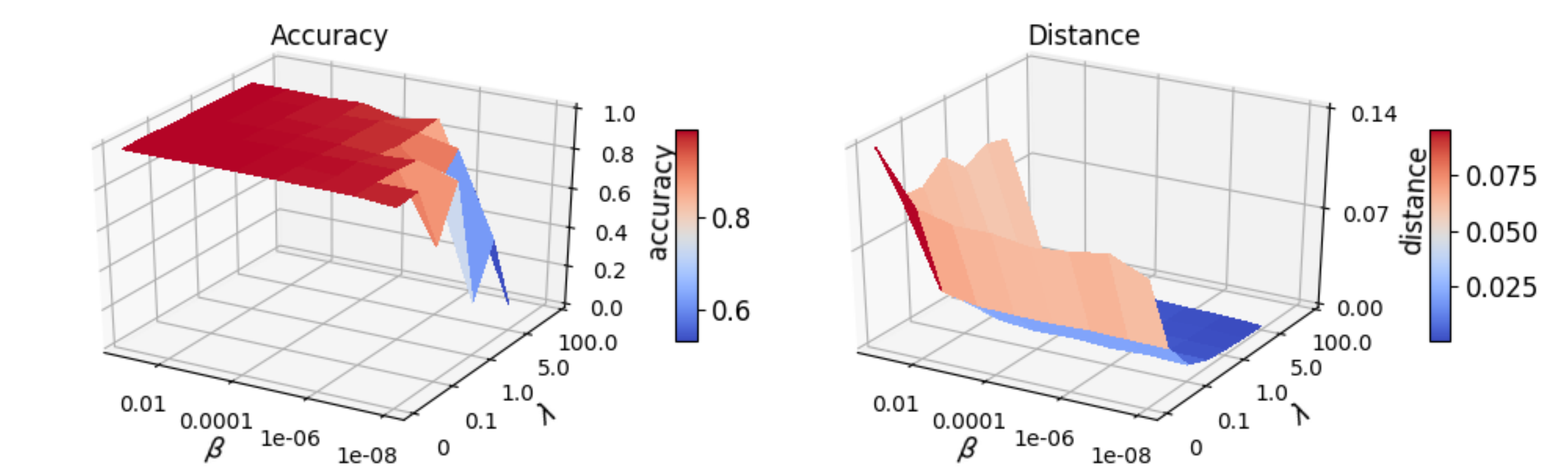}}\hspace{2mm}
    \subfloat[FashionMNIST]{\includegraphics[width=0.8\linewidth]{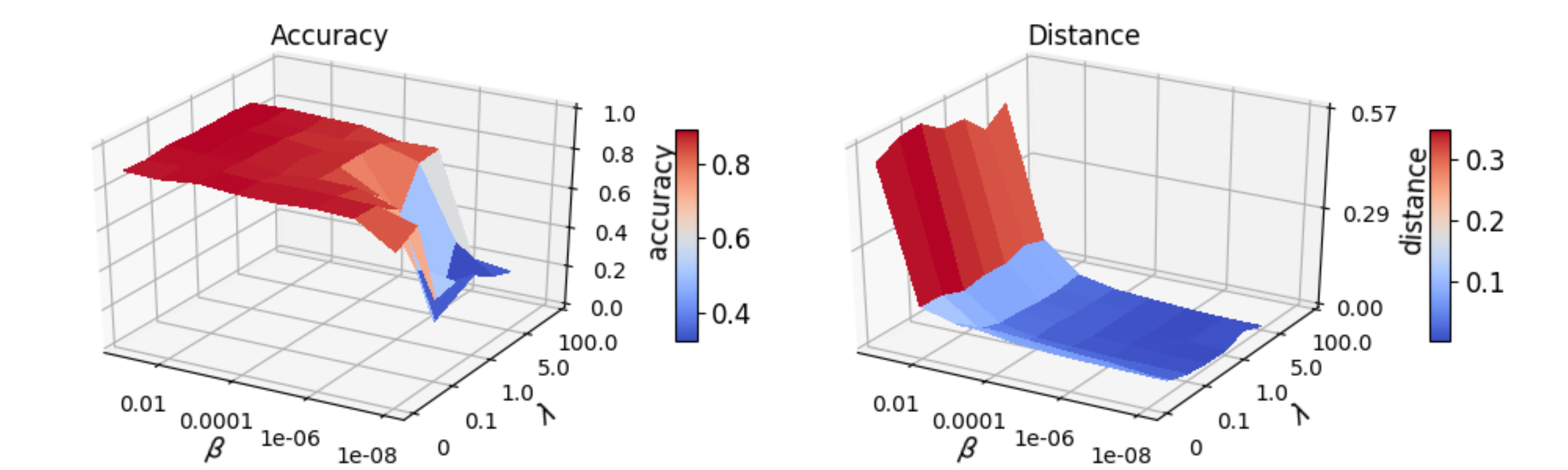}}\hspace{2mm}
    \subfloat[SVHN]{\includegraphics[width=0.8\linewidth]{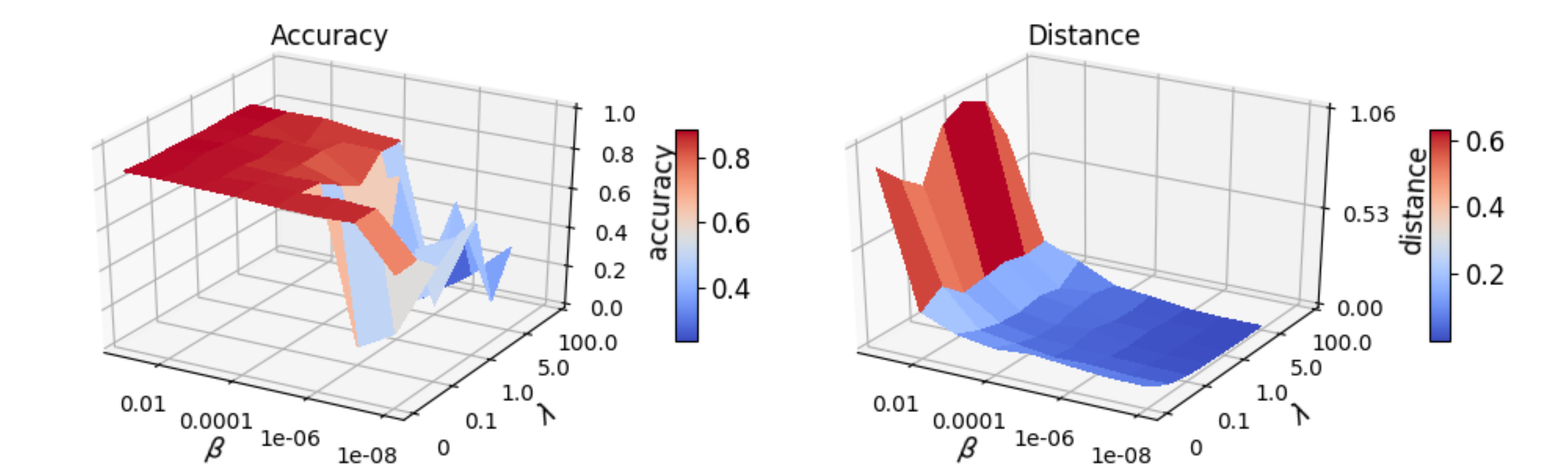}}\hspace{2mm}
    \caption{\small The trade-off between data separability and data utility. We show testing accuracy and mean pairwise distance (data separability) on three datasets with different $\lambda$ and $\beta$.  $\lambda$ and $\beta$ show complementary effort on adjusting data separability. A sweet-spot can be found at the ($\lambda$, $\beta$), achieving small data separability and high data utility.}
    \label{fig:digit_acc}
\end{figure*}

\begin{figure}[t]
    \centering
    \includegraphics[width=\linewidth]{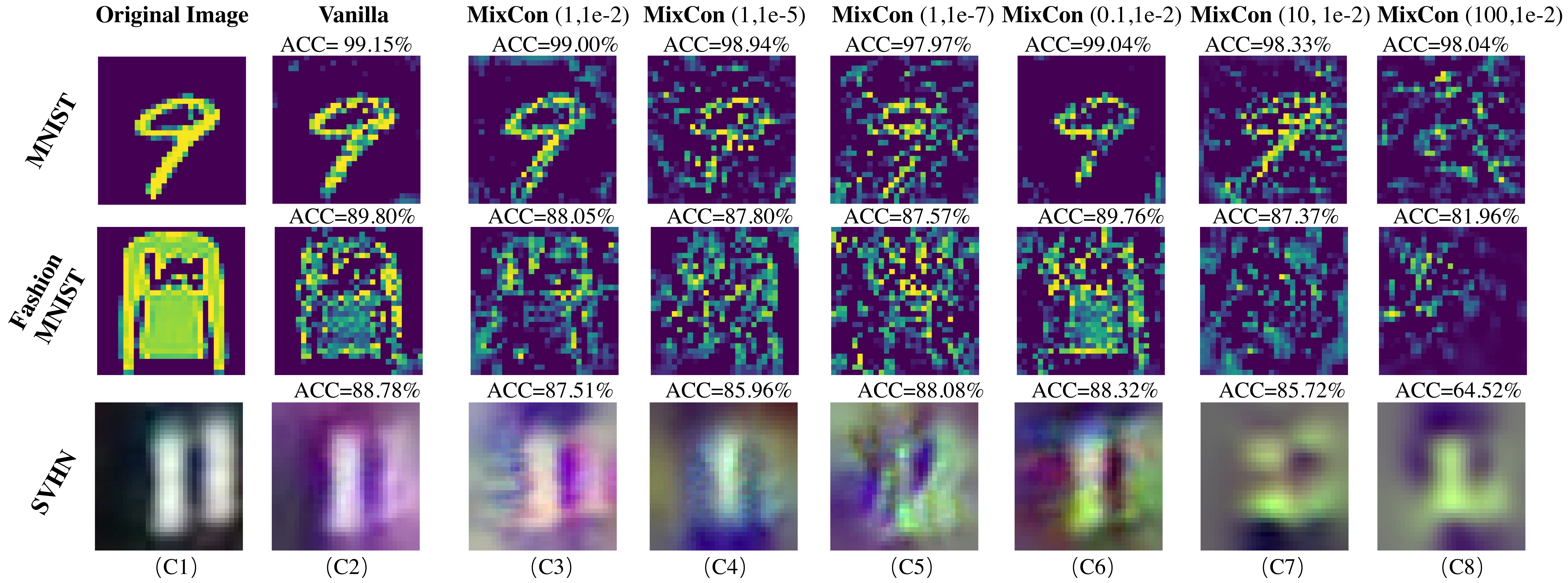}
    \caption{\small Qualitative evaluation for image inversion results. $(\lambda,~\beta)$ settings of {\mixcon} denoted on the header. The corresponding testing accuracy of each dataset is denoted on the top of each row. Compared to vanilla training, inversions from the {\mixcon} model are less realistic and distinguishable from the original images without significant accuracy dropping.}
    \label{fig:inversion}
\end{figure}

\paragraph{Network, datasets and training setup.}
The neural network architecture used in the experiments is LeNet5~\cite{lecun2015lenet}\footnote{We change input channel to 3 for SVHN dataset.}.  ${\cal L}_{\mathrm{mixcon}}$ is applied to the outputs of the 2nd convolutional layer blocks of LeNet5. The experiments use three datasets:
MNIST \cite{lecun1998mnist}, Fashion-MNIST \cite{xiao2017fashion}, and SVHN \cite{netzer2011svhn}.

Neural network is optimized using cross-entropy loss and SGD optimizer with learning rate 0.01 for 20 epochs. We do not use any data augmentation or manual learning rate decay. {\mixcon} loss is applied to the output of 2nd convolutional layer blocks in LeNet5. We train the model with different pairs of $(\lambda,\beta)$ in Eq.~\eqref{eq:mixcon} for the following testing. Specifically, we vary $\lambda$ from: $\{0.01, 0.1, 0.5, 1, 2, 5, 10, 100\}$ and  $\beta$  from: $\{10^{-2}, 10^{-3}, 10^{-4}, 10^{-5}, 10^{-6}, 10^{-7}, 10^{-8} \}$.

\paragraph{Testing setup.}  
We record the testing accuracy and pairwise distance of data representation under each pair of $(\lambda,\beta)$ for each dataset. Following a recent model inversion method \cite{he2019model},  we define ${\cal L}$ in Eq.~\eqref{eq:attack-exp} as $\ell_2$-loss function, ${\cal R}$ as the regularization term capturing the total variation of a 2D signal defined as $\mathcal{R}(a) = \sum_{i,j} ( (a_{i+1, j} - a_{i,j})^2 + (a_{i, j+1} - a_{i,j})^2 )^{1/2}$. The inversion attack is applied to the output of 2nd convolutional layer blocks in LeNet5 and find the optimal of Eq.~\eqref{eq:attack-exp} though SGD optimizer. Detailed optimization process is listed in Appendix~\ref{app:invert2}. 

We use metrics normalized structural similarity index metric (SSIM)~\cite{wbs+04} and perceptual similarity (PSIM)~\cite{johnson2016perceptual} to measure the similarity between the recovered image and the original image. The concrete definitions of SSIM and PSIM are listed in Appendix \ref{sec:metric}.
\ifdefined\isarxiv \else\vspace{-4mm}\fi
\paragraph{Results}
To answer Q3, we plot the complementary effects of $\lambda$ and $\beta$ in Figure \ref{fig:digit_acc}.  
 Note that $\beta$ bounds the minimal pairwise of data representations, and $\lambda$ indicate the penalty power on data separability given by {\mixcon}.
Namely, a larger $\lambda$ brings stronger penalty of {\mixcon}, which enhances the regularization of data separability and results in lower accuracy. Meanwhile, with a small $\beta$, $\lambda$ is not necessary to be very large, as smaller $\beta$ leads to a smaller bound of data separability, thus resulting in lower accuracy. Hence, $\lambda$ and $\beta$ work together to adjust the separability of hidden data representations, which can affect on data utility. 

To answer Q4, we evaluate the quality of inversion qualitatively and quantitatively through a model inversion attack defined in ``Test setup'' paragraph. Specifically, for each private input $x$, we execute the inversion attack on $h_\text{mixcon}(x)$ and $h_\text{vanilla}(x)$ of testing images. As it is qualitatively shown in Figure \ref{fig:inversion}, first, the recovered images using model inversion from {\mixcon} training (such as given $(\lambda, \beta)$ $\in$ {$\{ (1,  1\times10^{-7})$, $(10, 1\times10^{-2})$, $(100, 1\times10^{-2}) \}$}) are visually different from the original inputs, while the recovered images from Vanilla training still look similar to the originals. Second, with the same $\lambda$ (Figure \ref{fig:inversion} column c3-c5), the smaller the $\beta$ it is, the less similar of the recovered images to original images. Last, with the same $\beta$ (Figure \ref{fig:inversion} column c3 and c6-c8), the larger the $\lambda$ it is, the less similar of the recovered images to original images. 

\begin{table}[t]
\centering
\resizebox{\textwidth}{!}{%
\begin{tabular}{l|cc|cc|cc} \toprule
 & \multicolumn{2}{c|}{{\bf MNIST}} & \multicolumn{2}{c|}{{\bf FashionMNIST}} & \multicolumn{2}{c}{{\bf SVHN}} \\
 & \multicolumn{1}{c}{\begin{tabular}[c]{@{}c@{}}Vanilla\\ - \end{tabular}} & \multicolumn{1}{c|}{\begin{tabular}[c]{@{}c@{}}{\mixcon}\\ $(\lambda=1.0, \beta = 10^{-4})$\end{tabular}} & \multicolumn{1}{c}{\begin{tabular}[c]{@{}c@{}}Vanilla\\ - \end{tabular}} & \multicolumn{1}{c|}{\begin{tabular}[c]{@{}c@{}}{\mixcon}\\ $(\lambda=1.0, \beta = 10^{-4})$\end{tabular}} & \multicolumn{1}{c}{\begin{tabular}[c]{@{}c@{}}Vanilla\\ - \end{tabular}} & \multicolumn{1}{c}{\begin{tabular}[c]{@{}c@{}} {\mixcon}\\ $(\lambda=0.5, \beta = 10^{-4})$\end{tabular}}  \\
 \hline
Acc ($\%$) & $99.1$ & $98.6$ & $89.8$ & $88.9$ & $88.4$ & $88.2$  \\
SSIM & $0.64 \pm 0.11(0.83)$  & $ 0.14 \pm 0.11 (0.48)$ & $0.43 \pm 0.17 (0.78)$ & $0.17 \pm 0.09 (0.52)$ & $0.76 \pm 0.19 (0.92)$ & $0.61 \pm 0.15 (0.84)$ \\
PSIM & $0.78 \pm 0.05 (0.88)$ & $0.44 \pm 0.07 (0.69)$ & $0.71 \pm 0.13 (0.92)$ & $0.42 \pm 0.08 (0.66)$  & $0.69 \pm 0.07 (0.81)$ & $0.59 \pm 0.07 (0.72)$\\
\bottomrule
\end{tabular}%
}
\caption{\small Quantitative evaluations for image recovery results. For fair evaluation, we match the data utility (accuracy) for Vanilla and {\mixcon}. SSIM and PSIM are measured on 100 testing samples. Those scores are presented in mean $\pm$ std and worst-case (in parentheses) format. Lower scores indicate harder to invert.} 
\ifdefined\isarxiv\else\vspace{-4mm}\fi
\label{tab:similarity}
\end{table}
Further, we quantitatively measure the inversion performance by reporting the averaged similarity between 100 pairs of recovered images by the inversion model and their original samples. We select $(\lambda,~\beta)$ to match the accuracy results of {\mixcon} to be as good as Vanilla training (see Accuracy in Table~\ref{tab:similarity}), and investigate if {\mixcon} makes the inversion attack harder. The inverted results (see SSIM and PSIM in Table \ref{tab:similarity}) are reported in the format of mean $\pm$ std and the worst case (the best-recovered data) similarity in parentheses for each metric. Both qualitative and quantitative results agree with our hypothesis that 1) adding ${\cal L}_{\mathrm{mixcon}}$ in network training can reduce the mean pairwise distance (separability) of data hidden representations; and 2) maller separability make it more difficult to invert original inputs. Thus by visiting through possible $(\lambda,~\beta)$, we are able to find a spot, where data utility is reasonable but harder for data recovery, such as $(\lambda =100,~\beta=1e-2)$ for MNIST (Figure \ref{fig:inversion}).

\section{Discussion and Conclusion}
In this paper, we have proposed and studied the trade-off between  data  utility  and  data  recovery from the angle of the separability of hidden data representations in deep neural network. We propose using {\mixcon}, a consistency loss term, as an effective way to adjust the data separability.  Our proposal is inspired by theoretical data separability results and a new exponential lower bound on approximately solving the network inversion problem, based on the Exponential Time Hypothesis ({\ETH}).

We conduct two sets of experiments, using synthetic and benchmark datasets, to show the effect of adjusting data separability on accuracy and data recovery.  Our theoretical insights help explain our key experimental findings: {\mixcon} can effectively adjust the separability of hidden data representations, and one can find ``sweet-spot'' parameters for {\mixcon} to make it difficult to recover data while maintaining data utility. Our experiments are limited to small benchmark datasets in the domain of image classifications.  It will be helpful to conduct experiments using large datasets in multiple domains to further the study of the potential of adjusting data separability of data representations to trade-off between data utility and data recovery.

\ifdefined\isarxiv
\section*{Acknowledgments}
This project is supported in part by Princeton University fellowship, Schmidt Foundation, Simons Foundation, NSF, DARPA/SRC, Google and Amazon AWS. 
\else
\fi



\ifdefined\isarxiv
\addcontentsline{toc}{section}{References}
\bibliographystyle{alpha}
\else
\bibliographystyle{iclr2021_conference}
\fi
\bibliography{ref}

\newcommand{\etalchar}[1]{$^{#1}$}
\begin{thebibliography}{LLTMK19}

\bibitem[AB09]{ab09}
Sanjeev Arora and Boaz Barak.
\newblock {\em Computational complexity: a modern approach}.
\newblock Cambridge University Press, 2009.

\bibitem[ADH{\etalchar{+}}19a]{adh+19a}
Sanjeev Arora, Simon Du, Wei Hu, Zhiyuan Li, and Ruosong Wang.
\newblock Fine-grained analysis of optimization and generalization for
  overparameterized two-layer neural networks.
\newblock In {\em International Conference on Machine Learning (ICML)}, pages
  322--332, 2019.

\bibitem[ADH{\etalchar{+}}19b]{adh+19b}
Sanjeev Arora, Simon~S Du, Wei Hu, Zhiyuan Li, Russ~R Salakhutdinov, and
  Ruosong Wang.
\newblock On exact computation with an infinitely wide neural net.
\newblock In {\em Advances in Neural Information Processing Systems (NeurIPS)},
  pages 8141--8150, 2019.

\bibitem[AGKM12]{agkm12}
Sanjeev Arora, Rong Ge, Ravi Kannan, and Ankur Moitra.
\newblock Computing a nonnegative matrix factorization---provably.
\newblock In {\em STOC}, 2012.

\bibitem[AZLL19]{all19}
Zeyuan Allen-Zhu, Yuanzhi Li, and Yingyu Liang.
\newblock Learning and generalization in overparameterized neural networks,
  going beyond two layers.
\newblock In {\em Advances in neural information processing systems (NeurIPS)},
  pages 6158--6169, 2019.

\bibitem[AZLS19a]{als19a}
Zeyuan Allen-Zhu, Yuanzhi Li, and Zhao Song.
\newblock A convergence theory for deep learning via over-parameterization.
\newblock In {\em ICML}, 2019.

\bibitem[AZLS19b]{als19b}
Zeyuan Allen-Zhu, Yuanzhi Li, and Zhao Song.
\newblock On the convergence rate of training recurrent neural networks.
\newblock In {\em NeurIPS}, 2019.

\bibitem[BBB{\etalchar{+}}19]{bbb+19}
Frank Ban, Vijay Bhattiprolu, Karl Bringmann, Pavel Kolev, Euiwoong Lee, and
  David~P Woodruff.
\newblock A ptas for lp-low rank approximation.
\newblock In {\em Proceedings of the Thirtieth Annual ACM-SIAM Symposium on
  Discrete Algorithms (SODA)}, pages 747--766. SIAM, 2019.

\bibitem[BGKM18]{bgkm18}
Arnab Bhattacharyya, Suprovat Ghoshal, {Karthik {C. S.}}, and Pasin Manurangsi.
\newblock Parameterized intractability of even set and shortest vector problem
  from gap-eth.
\newblock In {\em {ICALP}}, pages 17:1--17:15, 2018.

\bibitem[Bis95]{b95}
Christopher~M Bishop.
\newblock {\em Neural networks for pattern recognition}.
\newblock Oxford university press, 1995.

\bibitem[BJW19]{bjm19}
Ainesh Bakshi, Rajesh Jayaram, and David~P Woodruff.
\newblock Learning two layer rectified neural networks in polynomial time.
\newblock In {\em Conference on Learning Theory (COLT)}, pages 195--268. PMLR,
  2019.

\bibitem[BPSW20]{bpsw20}
Jan van~den Brand, Binghui Peng, Zhao Song, and Omri Weinstein.
\newblock Training (overparametrized) neural networks in near-linear time.
\newblock {\em arXiv preprint arXiv:2006.11648}, 2020.

\bibitem[BR92]{br92}
Avrim~L Blum and Ronald~L Rivest.
\newblock Training a 3-node neural network is np-complete.
\newblock {\em Neural Networks}, 5(1):117--127, 1992.

\bibitem[CCK{\etalchar{+}}17]{cck+17}
Parinya Chalermsook, Marek Cygan, Guy Kortsarz, Bundit Laekhanukit, Pasin
  Manurangsi, Danupon Nanongkai, and Luca Trevisan.
\newblock From gap-eth to fpt-inapproximability: Clique, dominating set, and
  more.
\newblock In {\em 2017 IEEE 58th Annual Symposium on Foundations of Computer
  Science (FOCS)}, pages 743--754. IEEE, 2017.

\bibitem[CFM18]{cfm18}
Rajesh Chitnis, Andreas~Emil Feldmann, and Pasin Manurangsi.
\newblock Parameterized approximation algorithms for bidirected steiner network
  problems.
\newblock In {\em {ESA}}, pages 20:1--20:16, 2018.

\bibitem[CKM20]{ckm20}
Sitan Chen, Adam~R. Klivans, and Raghu Meka.
\newblock Learning deep relu networks is fixed-parameter tractable.
\newblock {\em arXiv preprint arXiv:2009.13512}, 2020.

\bibitem[Dan16]{dan16}
Amit Daniely.
\newblock Complexity theoretic limitations on learning halfspaces.
\newblock In {\em Proceedings of the forty-eighth annual ACM symposium on
  Theory of Computing (STOC)}, pages 105--117, 2016.

\bibitem[DB16]{db16}
Alexey Dosovitskiy and Thomas Brox.
\newblock Inverting visual representations with convolutional networks.
\newblock In {\em Proceedings of the IEEE conference on computer vision and
  pattern recognition (CVPR)}, pages 4829--4837, 2016.

\bibitem[Din16]{d16}
Irit Dinur.
\newblock Mildly exponential reduction from gap 3sat to polynomial-gap
  label-cover.
\newblock In {\em Electronic Colloquium on Computational Complexity (ECCC)},
  volume~23, 2016.

\bibitem[Din17]{d17}
Irit Dinur.
\newblock Personal communication.
\newblock 2017.

\bibitem[DM18]{dm18}
Irit Dinur and Pasin Manurangsi.
\newblock Eth-hardness of approximating 2-csps and directed steiner network.
\newblock In {\em ITCS}, 2018.

\bibitem[DSS16]{ds16}
Amit Daniely and Shai Shalev-Shwartz.
\newblock Complexity theoretic limitations on learning dnf’s.
\newblock In {\em Conference on Learning Theory (COLT)}, pages 815--830, 2016.

\bibitem[DV20]{dv20}
Amit Daniely and Gal Vardi.
\newblock Hardness of learning neural networks with natural weights.
\newblock {\em arXiv preprint arXiv:2006.03177}, 2020.

\bibitem[DZPS19]{dzps19}
Simon~S Du, Xiyu Zhai, Barnabas Poczos, and Aarti Singh.
\newblock Gradient descent provably optimizes over-parameterized neural
  networks.
\newblock In {\em ICLR}, 2019.

\bibitem[EAP19]{eshratifar2019jointdnn}
Amir~Erfan Eshratifar, Mohammad~Saeed Abrishami, and Massoud Pedram.
\newblock Jointdnn: an efficient training and inference engine for intelligent
  mobile cloud computing services.
\newblock {\em IEEE Transactions on Mobile Computing}, 2019.

\bibitem[Fei02]{f02}
Uriel Feige.
\newblock Relations between average case complexity and approximation
  complexity.
\newblock In {\em Proceedings of the thiry-fourth annual ACM symposium on
  Theory of computing (STOC)}, pages 534--543. ACM, 2002.

\bibitem[FJR15]{fredrikson2015model}
Matt Fredrikson, Somesh Jha, and Thomas Ristenpart.
\newblock Model inversion attacks that exploit confidence information and basic
  countermeasures.
\newblock In {\em Proceedings of the 22nd ACM SIGSAC Conference on Computer and
  Communications Security (CCS)}, pages 1322--1333, 2015.

\bibitem[FWX{\etalchar{+}}19]{fu2019mixup}
Yingwei Fu, Huaimin Wang, Kele Xu, Haibo Mi, and Yijie Wang.
\newblock Mixup based privacy preserving mixed collaboration learning.
\newblock In {\em 2019 IEEE International Conference on Service-Oriented System
  Engineering (SOSE)}, pages 275--2755. IEEE, 2019.

\bibitem[GKKT17]{gkkt17}
Surbhi Goel, Varun Kanade, Adam Klivans, and Justin Thaler.
\newblock Reliably learning the relu in polynomial time.
\newblock In {\em Conference on Learning Theory (COLT)}, pages 1004--1042.
  PMLR, 2017.

\bibitem[GLM18]{glm18}
Rong Ge, Jason~D. Lee, and Tengyu Ma.
\newblock Learning one-hidden-layer neural networks with landscape design.
\newblock In {\em ICLR}, 2018.

\bibitem[H{\aa}s00]{h00}
Johan H{\aa}stad.
\newblock On bounded occurrence constraint satisfaction.
\newblock {\em Information Processing Letters}, 74(1-2):1--6, 2000.

\bibitem[H{\aa}s01]{h01}
Johan H{\aa}stad.
\newblock Some optimal inapproximability results.
\newblock {\em Journal of the ACM (JACM)}, 48(4):798--859, 2001.

\bibitem[HMZ{\etalchar{+}}14]{hauswald2014hybrid}
Johann Hauswald, Thomas Manville, Qi~Zheng, Ronald Dreslinski, Chaitali
  Chakrabarti, and Trevor Mudge.
\newblock A hybrid approach to offloading mobile image classification.
\newblock In {\em 2014 IEEE International Conference on Acoustics, Speech and
  Signal Processing (ICASSP)}, pages 8375--8379. IEEE, 2014.

\bibitem[HSC{\etalchar{+}}20]{hscla20}
Yangsibo Huang, Zhao Song, Danqi Chen, Kai Li, and Sanjeev Arora.
\newblock Texthide: Tackling data privacy in language understanding tasks.
\newblock In {\em The Conference on Empirical Methods in Natural Language
  Processing (Findings of EMNLP)}, 2020.

\bibitem[HSLA20]{hsla20}
Yangsibo Huang, Zhao Song, Kai Li, and Sanjeev Arora.
\newblock Instahide: Instance-hiding schemes for private distributed learning.
\newblock In {\em Internation Conference on Machine Learning (ICML)}, 2020.

\bibitem[HZL19]{he2019model}
Zecheng He, Tianwei Zhang, and Ruby~B Lee.
\newblock Model inversion attacks against collaborative inference.
\newblock In {\em Proceedings of the 35th Annual Computer Security Applications
  Conference}, pages 148--162, 2019.

\bibitem[IPZ98]{ipz98}
Russell Impagliazzo, Ramamohan Paturi, and Francis Zane.
\newblock Which problems have strongly exponential complexity?
\newblock In {\em Proceedings. 39th Annual Symposium on Foundations of Computer
  Science (FOCS)}, pages 653--662. IEEE, 1998.

\bibitem[JAFF16]{johnson2016perceptual}
Justin Johnson, Alexandre Alahi, and Li~Fei-Fei.
\newblock Perceptual losses for real-time style transfer and super-resolution.
\newblock In {\em European Conference on Computer Vision (ECCV)}, pages
  694--711. Springer, 2016.

\bibitem[JGH18]{jgh18}
Arthur Jacot, Franck Gabriel, and Cl{\'e}ment Hongler.
\newblock Neural tangent kernel: Convergence and generalization in neural
  networks.
\newblock In {\em Advances in neural information processing systems (NeurIPS)},
  pages 8571--8580, 2018.

\bibitem[KHG{\etalchar{+}}17]{kang2017neurosurgeon}
Yiping Kang, Johann Hauswald, Cao Gao, Austin Rovinski, Trevor Mudge, Jason
  Mars, and Lingjia Tang.
\newblock Neurosurgeon: Collaborative intelligence between the cloud and mobile
  edge.
\newblock {\em ACM SIGARCH Computer Architecture News}, 45(1):615--629, 2017.

\bibitem[KM18]{km18}
B~Laekhanukit KCS and P~Manurangsi.
\newblock On the parameterized complexity of approximating dominating set.
\newblock In {\em Proceedings of the 50th Annual ACM SIGACT Symposium on Theory
  of Computing (STOC)}, pages 1283--1296, 2018.

\bibitem[KMA{\etalchar{+}}19]{kairouz2019advances}
Peter Kairouz, H.~Brendan McMahan, Brendan Avent, Aurélien Bellet, Mehdi
  Bennis, Arjun~Nitin Bhagoji, Keith Bonawitz, Zachary Charles, Graham Cormode,
  Rachel Cummings, Rafael G.~L. D'Oliveira, Salim~El Rouayheb, David Evans,
  Josh Gardner, Zachary Garrett, Adrià Gascón, Badih Ghazi, Phillip~B.
  Gibbons, Marco Gruteser, Zaid Harchaoui, Chaoyang He, Lie He, Zhouyuan Huo,
  Ben Hutchinson, Justin Hsu, Martin Jaggi, Tara Javidi, Gauri Joshi, Mikhail
  Khodak, Jakub Konečný, Aleksandra Korolova, Farinaz Koushanfar, Sanmi
  Koyejo, Tancrède Lepoint, Yang Liu, Prateek Mittal, Mehryar Mohri, Richard
  Nock, Ayfer Özgür, Rasmus Pagh, Mariana Raykova, Hang Qi, Daniel Ramage,
  Ramesh Raskar, Dawn Song, Weikang Song, Sebastian~U. Stich, Ziteng Sun,
  Ananda~Theertha Suresh, Florian Tramèr, Praneeth Vepakomma, Jianyu Wang,
  Li~Xiong, Zheng Xu, Qiang Yang, Felix~X. Yu, Han Yu, and Sen Zhao.
\newblock Advances and open problems in federated learning, 2019.

\bibitem[KMR15]{konevcny2015federated}
Jakub Kone{\v{c}}n{\`y}, Brendan McMahan, and Daniel Ramage.
\newblock Federated optimization: Distributed optimization beyond the
  datacenter.
\newblock {\em arXiv preprint arXiv:1511.03575}, 2015.

\bibitem[KNAM18]{ko2018edge}
Jong~Hwan Ko, Taesik Na, Mohammad~Faisal Amir, and Saibal Mukhopadhyay.
\newblock Edge-host partitioning of deep neural networks with feature space
  encoding for resource-constrained internet-of-things platforms.
\newblock In {\em 2018 15th IEEE International Conference on Advanced Video and
  Signal Based Surveillance (AVSS)}, pages 1--6. IEEE, 2018.

\bibitem[KS09]{ks09}
Adam~R Klivans and Alexander~A Sherstov.
\newblock Cryptographic hardness for learning intersections of halfspaces.
\newblock {\em Journal of Computer and System Sciences}, 75(1):2--12, 2009.

\bibitem[KTW{\etalchar{+}}20]{khosla2020supervised}
Prannay Khosla, Piotr Teterwak, Chen Wang, Aaron Sarna, Yonglong Tian, Phillip
  Isola, Aaron Maschinot, Ce~Liu, and Dilip Krishnan.
\newblock Supervised contrastive learning.
\newblock {\em arXiv preprint arXiv:2004.11362}, 2020.

\bibitem[LBBH98]{lecun1998mnist}
Yann LeCun, L{\'e}on Bottou, Yoshua Bengio, and Patrick Haffner.
\newblock Gradient-based learning applied to document recognition.
\newblock {\em Proceedings of the IEEE}, 86(11):2278--2324, 1998.

\bibitem[LeC15]{lecun2015lenet}
Yann LeCun.
\newblock Lenet-5, convolutional neural networks.
\newblock {\em \url{http://yann.lecun.com/exdb/lenet}}, 20(5):14, 2015.

\bibitem[LJDD19]{ljdd19}
Qi~Lei, Ajil Jalal, Inderjit~S Dhillon, and Alexandros~G Dimakis.
\newblock Inverting deep generative models, one layer at a time.
\newblock In {\em Advances in Neural Information Processing Systems (NeurIPS)},
  pages 13910--13919, 2019.

\bibitem[LL18]{ll18}
Yuanzhi Li and Yingyu Liang.
\newblock Learning overparameterized neural networks via stochastic gradient
  descent on structured data.
\newblock In {\em Advances in Neural Information Processing Systems (NeurIPS)},
  pages 8157--8166, 2018.

\bibitem[LLTMK19]{lucas2019generative}
Alice Lucas, Santiago Lopez-Tapia, Rafael Molina, and Aggelos~K Katsaggelos.
\newblock Generative adversarial networks and perceptual losses for video
  super-resolution.
\newblock {\em IEEE Transactions on Image Processing}, 28(7):3312--3327, 2019.

\bibitem[LSS{\etalchar{+}}20]{lsswy20}
Jason~D Lee, Ruoqi Shen, Zhao Song, Mengdi Wang, and Zheng Yu.
\newblock Generalized leverage score sampling for neural networks.
\newblock In {\em NeurIPS}, 2020.

\bibitem[LSSS14]{lss14}
Roi Livni, Shai Shalev-Shwartz, and Ohad Shamir.
\newblock On the computational efficiency of training neural networks.
\newblock In {\em Advances in neural information processing systems (NeurIPS)},
  pages 855--863, 2014.

\bibitem[LY17]{ly17}
Yuanzhi Li and Yang Yuan.
\newblock Convergence analysis of two-layer neural networks with {R}e{LU}
  activation.
\newblock In {\em Advances in neural information processing systems (NIPS)},
  pages 597--607, 2017.

\bibitem[Man17]{m17}
Pasin Manurangsi.
\newblock Almost-polynomial ratio eth-hardness of approximating densest
  k-subgraph.
\newblock In {\em STOC}, pages 954--961. {ACM}, 2017.

\bibitem[MR10]{mr10}
Dana Moshkovitz and Ran Raz.
\newblock Two-query pcp with subconstant error.
\newblock In {\em Journal of the ACM (JACM)}, volume 57(5), page~29. A
  preliminary version appeared in the Proceedings of The 49th Annual IEEE
  Symposium on Foundations of Computer Science (FOCS 2008), 2010.

\bibitem[MR17]{mr16}
Pasin Manurangsi and Prasad Raghavendra.
\newblock A birthday repetition theorem and complexity of approximating dense
  csps.
\newblock In {\em ICALP}, pages 78:1--78:15, 2017.

\bibitem[MR18]{mr18}
Pasin Manurangsi and Daniel Reichman.
\newblock The computational complexity of training relu (s).
\newblock {\em arXiv preprint arXiv:1810.04207}, 2018.

\bibitem[NWC{\etalchar{+}}11]{netzer2011svhn}
Yuval Netzer, Tao Wang, Adam Coates, Alessandro Bissacco, Bo~Wu, and Andrew~Y
  Ng.
\newblock Reading digits in natural images with unsupervised feature learning.
\newblock 2011.

\bibitem[OS20]{os20}
Samet Oymak and Mahdi Soltanolkotabi.
\newblock Towards moderate overparameterization: global convergence guarantees
  for training shallow neural networks.
\newblock In {\em IEEE Journal on Selected Areas in Information Theory}. IEEE,
  2020.

\bibitem[RMC15]{radford2015unsupervised}
Alec Radford, Luke Metz, and Soumith Chintala.
\newblock Unsupervised representation learning with deep convolutional
  generative adversarial networks, 2015.

\bibitem[RSW16]{rsw16}
Ilya Razenshteyn, Zhao Song, and David~P Woodruff.
\newblock Weighted low rank approximations with provable guarantees.
\newblock In {\em Proceedings of the 48th Annual Symposium on the Theory of
  Computing (STOC)}, 2016.

\bibitem[SS15]{shokri2015privacy}
Reza Shokri and Vitaly Shmatikov.
\newblock Privacy-preserving deep learning.
\newblock In {\em Proceedings of the 22nd ACM SIGSAC Conference on Computer and
  Communications Security (CCS)}, pages 1310--1321, 2015.

\bibitem[SWZ17]{swz17}
Zhao Song, David~P Woodruff, and Peilin Zhong.
\newblock Low rank approximation with entrywise l1-norm error.
\newblock In {\em Proceedings of the 49th Annual ACM SIGACT Symposium on Theory
  of Computing (STOC)}, pages 688--701, 2017.

\bibitem[SWZ19]{swz19}
Zhao Song, David~P Woodruff, and Peilin Zhong.
\newblock Relative error tensor low rank approximation.
\newblock In {\em SODA}, 2019.

\bibitem[SY19]{sy19}
Zhao Song and Xin Yang.
\newblock Quadratic suffices for over-parametrization via matrix chernoff
  bound.
\newblock {\em arXiv preprint arXiv:1906.03593}, 2019.

\bibitem[SZ14]{simonyan2014very}
Karen Simonyan and Andrew Zisserman.
\newblock Very deep convolutional networks for large-scale image recognition.
\newblock {\em arXiv preprint arXiv:1409.1556}, 2014.

\bibitem[TMK17]{teerapittayanon2017distributed}
Surat Teerapittayanon, Bradley McDanel, and Hsiang-Tsung Kung.
\newblock Distributed deep neural networks over the cloud, the edge and end
  devices.
\newblock In {\em 2017 IEEE 37th International Conference on Distributed
  Computing Systems (ICDCS)}, pages 328--339. IEEE, 2017.

\bibitem[Tre01]{t01}
Luca Trevisan.
\newblock Non-approximability results for optimization problems on bounded
  degree instances.
\newblock In {\em Proceedings of the thirty-third annual ACM symposium on
  Theory of computing (STOC)}, pages 453--461, 2001.

\bibitem[VBT16]{vanhaesebrouck2016decentralized}
Paul Vanhaesebrouck, Aur{\'e}lien Bellet, and Marc Tommasi.
\newblock Decentralized collaborative learning of personalized models over
  networks.
\newblock {\em arXiv preprint arXiv:1610.05202}, 2016.

\bibitem[VGSR18]{vepakomma2018split}
Praneeth Vepakomma, Otkrist Gupta, Tristan Swedish, and Ramesh Raskar.
\newblock Split learning for health: Distributed deep learning without sharing
  raw patient data.
\newblock {\em arXiv preprint arXiv:1812.00564}, 2018.

\bibitem[WBSS04]{wbs+04}
Zhou Wang, Alan~C Bovik, Hamid~R Sheikh, and Eero~P Simoncelli.
\newblock Image quality assessment: from error visibility to structural
  similarity.
\newblock {\em IEEE transactions on image processing}, 13(4):600--612, 2004.

\bibitem[WXWT18]{wang2018perceptual}
Chaoyue Wang, Chang Xu, Chaohui Wang, and Dacheng Tao.
\newblock Perceptual adversarial networks for image-to-image transformation.
\newblock {\em IEEE Transactions on Image Processing}, 27(8):4066--4079, 2018.

\bibitem[WZC{\etalchar{+}}18]{wzc+18}
Tsui-Wei Weng, Huan Zhang, Hongge Chen, Zhao Song, Cho-Jui Hsieh, Duane Boning,
  Inderjit~S Dhillon, and Luca Daniel.
\newblock Towards fast computation of certified robustness for relu networks.
\newblock In {\em ICML}, 2018.

\bibitem[XRV17]{xiao2017fashion}
Han Xiao, Kashif Rasul, and Roland Vollgraf.
\newblock Fashion-mnist: a novel image dataset for benchmarking machine
  learning algorithms.
\newblock {\em arXiv preprint arXiv:1708.07747}, 2017.

\bibitem[ZCDLP18]{zcdl18}
Hongyi Zhang, Moustapha Cisse, Yann~N Dauphin, and David Lopez-Paz.
\newblock mixup: Beyond empirical risk minimization.
\newblock In {\em International Conference on Learning Representations (ICLR)},
  2018.

\bibitem[ZJP{\etalchar{+}}20]{zhang2020secret}
Yuheng Zhang, Ruoxi Jia, Hengzhi Pei, Wenxiao Wang, Bo~Li, and Dawn Song.
\newblock The secret revealer: generative model-inversion attacks against deep
  neural networks.
\newblock In {\em Proceedings of the IEEE Conference on Computer Vision and
  Pattern Recognition (CVPR)}, pages 253--261, 2020.

\bibitem[ZLH19]{zlh19}
Ligeng Zhu, Zhijian Liu, and Song Han.
\newblock Deep leakage from gradients.
\newblock In {\em NeurIPS}, 2019.

\bibitem[ZPD{\etalchar{+}}20]{zpd+20}
Yi~Zhang, Orestis Plevrakis, Simon~S Du, Xingguo Li, Zhao Song, and Sanjeev
  Arora.
\newblock Over-parameterized adversarial training: An analysis overcoming the
  curse of dimensionality.
\newblock In {\em NeurIPS}, 2020.

\bibitem[ZSD17]{zsd17}
Kai Zhong, Zhao Song, and Inderjit~S. Dhillon.
\newblock Learning non-overlapping convolutional neural networks with multiple
  kernels.
\newblock {\em arXiv preprint arXiv:1711.03440}, 2017.

\bibitem[ZSJ{\etalchar{+}}17]{zsjbd17}
Kai Zhong, Zhao Song, Prateek Jain, Peter~L. Bartlett, and Inderjit~S. Dhillon.
\newblock Recovery guarantees for one-hidden-layer neural networks.
\newblock In {\em ICML}, 2017.

\end{thebibliography}

\newpage
\appendix 

\paragraph{Roadmap of Appendix} The Appendix is organized as follows.
We discuss related work in Section~\ref{sec:relate}.
We provide theoretical analysis in Section~\ref{sec:theory}.
The details of data recovery experiment are in Section~\ref{sec:recovery} and additional experiment details are in Section~\ref{sec:add-exp}.

\section{Related Work}\label{sec:relate}

\subsection{Hardness and Neural Networks}
When there are no further assumptions, neural networks have been shown hard in several different perspectives. \cite{br92} first proved that learning the neural network is NP-complete. Different variant hardness results have been developed over past decades \cite{ks09,dan16,ds16,gkkt17,lss14,wzc+18,mr18,ljdd19,dv20,hsla20,hscla20}. The work of \cite{ljdd19} is most relevant to us. They consider the neural network inversion problem in generative models and prove the exact inversion problem is NP-complete.

\subsection{Data Separability and Neural Network Training}

One popular distributional assumption, in theory, is to assume the input data points to be the Gaussian distributions
\cite{zsjbd17,ly17,zsd17,glm18,bjm19,ckm20} to show the convergence of training deep neural networks. Later, convergence analysis using weaker assumptions are proposed, i.e., input data points are separable \cite{ll18}. Following \cite{ll18,als19a,als19b,all19,zpd+20}, data separability plays a crucial role in deep learning theory, especially in showing the convergence result of over-parameterized neural network training. Denote $\delta$ is the minimum gap between all pairs data points. Data separability theory says as long as the width ($m$) of neural network is at least polynomial factor of all the parameters ($m \geq \poly(n,d,1/\delta)$), i.e., $n$ is the number of data points, $d$ is the dimension of data, and $delta$ is data separability.  
Another line of work \cite{dzps19,adh+19a,adh+19b,sy19,bpsw20,lsswy20} builds on neural tangent kernel \cite{jgh18}. It requires the minimum eigenvalue ($\lambda$) of neural tangent kernel is lower bounded. Recent work \cite{os20} finds the connection between data-separabiblity $\delta$ and minimum eigenvalue $\lambda$, i.e. $\delta \geq \lambda/n^2$. 

\subsection{Distributed Deep Learning System}
Collaboration between the edge device and cloud
server achieves higher inference speed and lowers power consumption than running the task solely on the local or remote platform. Typically there are two collaborative modes. The first is collaborative training, for which training task is distributed to multiple participants \cite{konevcny2015federated,vanhaesebrouck2016decentralized,kairouz2019advances}. The other model is collaborative inference. In such a distributed system setting, the neural network can be divided into two parts. The first few layers of the network are stored in the local edge device, while the rest are offloaded to a remote cloud server. Given an input, the edge device calculates the output of the first few layers and sends it to the cloud. Then cloud perform the rest of computation and sends the final
results to each edge device  \cite{eshratifar2019jointdnn,hauswald2014hybrid,kang2017neurosurgeon,teerapittayanon2017distributed}. In our work, we focus on tackling data recovery problem under collaborative inference mode.

\subsection{Data Security}
The neural network inversion problem has been extensively investigated in recent years~\cite{fredrikson2015model,he2019model,ljdd19,zhang2020secret}. 
As used in this paper, the general approach is to cast the network inversion as an optimization problem and uses a problem specified objective. 
In particular,
\cite{fredrikson2015model} proposes to use confidence in prediction as to the optimized objective. \cite{he2019model} uses a regularized maximum likelihood estimation. 
Recent work~\cite{zhang2020secret} also proposes to use GAN to do the model inversion.

Motivated by the success of mixing data~\cite{zcdl18}, there is a line of work focusing on using data augmentation to achieve security \cite{fu2019mixup,hsla20,hscla20}. 
The most recent result \cite{hsla20} proposes the Instahide method, which makes a linear combination of training data and adds a random sign on each coordinate.
As these methods generally rely on data augmentation, they do not apply to our setting.

\section{Hardness of neural network inversion}
\label{sec:theory}

\subsection{Preliminaries}

We first provide the definitions for {\SAT}, {\ETH}, {\MAXSAT}, {\MAXESAT} and then state some fundamental results related to those definitions. For more details, we refer the reader to the textbook \cite{ab09}.
\begin{definition}[{\SAT} problem]\label{def:3SAT}
Given $n$ variables and $m$ clauses in a conjunctive normal form {\CNF} formula with the size of each clause at most $3$, the goal is to decide whether there exists an assignment to the $n$ Boolean variables to make the {\CNF} formula be satisfied.
\end{definition}

\begin{hypothesis}[Exponential Time Hypothesis ({\ETH}) \cite{ipz98}]
There is a $\delta>0$ such that the {\SAT} problem defined in Definition \ref{def:3SAT} cannot be solved in $O(2^{\delta n})$ time.
\end{hypothesis}
{\ETH} is a stronger notion than {\NP}$\neq\mathsf{P}$, and is well acceptable the computational complexity community. Over the few years, there has been work proving hardness result under {\ETH} for theoretical computer science problems \cite{cck+17,m17,cfm18,bgkm18,dm18,km18} and machine learning problems, e.g. matrix factorizations \cite{agkm12,rsw16,swz17,bbb+19}, tensor decomposition \cite{swz19}.  
There are also variations of {\ETH}, e.g. Gap-{\ETH} \cite{d16,d17,mr16} and random-{\ETH} \cite{f02,rsw16}, which are also believable in the computational complexity community. 

\begin{definition}[{\MAXSAT}]
Given $n$ variables and $m$ clauses, a conjunctive normal form {\CNF} formula with the size of each clause at most $3$, the goal is to find an assignment that satisfies the largest number of clauses.
\end{definition}

We use {\MAXESAT} to denote the version of {\MAXSAT} where each clause contains exactly $3$ literals.
\begin{theorem}[\cite{h01}]
For every $\delta > 0$, it is {\NP}-hard to distinguish a satisfiable instance of {\MAXESAT} from an instance where at most a $7/8+\delta$ fraction of the clauses can be simultaneously satisfied.
\end{theorem}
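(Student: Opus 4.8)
The plan is to follow Håstad's optimal-inapproximability program: start from a gap version of Label Cover obtained from the PCP theorem together with the parallel repetition theorem, encode the Label Cover labels by the long code, and design a $3$-query probabilistically checkable proof (PCP) verifier whose acceptance predicate is exactly an OR of three literals. Since an {\MAXESAT} clause rejects exactly one of the $8$ assignments to its three variables, a verifier that reads three proof bits and accepts on all but one pattern is literally checking a single $3$-clause; the acceptance probability of the verifier then equals the fraction of simultaneously satisfiable clauses in the {\MAXESAT} instance obtained by writing down one clause per random string of the verifier.

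First I would fix the starting point. By the PCP theorem and parallel repetition, for every $\eta>0$ it is {\NP}-hard to distinguish a Label Cover instance of value $1$ (a labeling satisfying every edge constraint) from one of value at most $\eta$, where each edge $e$ carries a projection constraint $\pi_{e}$ from the larger label set to the smaller one. A proof for the verifier is supposed to contain, for each vertex, the long code of its label, i.e.\ the table of the dictator $\mathrm{LC}_{a}(x)=x_{a}$ on $\{-1,1\}^{R}$. To kill the trivial cheating strategies I would fold the tables over true and over the constant $-1$, which forces them to be (essentially) $\pm1$-valued and balanced and restricts their Fourier expansion to odd-size characters.

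The main work is the soundness analysis. The verifier picks a random edge, samples $x,y$ and a noise string, reads three bits $a,b,c$ from the folded tables, and accepts unless $(a,b,c)$ hits the single forbidden pattern. Writing the acceptance indicator of the OR predicate as $\tfrac{7}{8}$ plus a fixed linear combination of the products $ab,\,ac,\,bc,\,abc$, perfect completeness is immediate: honest dictator tables respect the consistency demanded by $\pi_e$, so the forbidden pattern never occurs and the verifier accepts with probability $1$, matching the ``satisfiable'' case of the theorem. For soundness I would expand each read bit in its Fourier series over the hypercube and take the expectation over the verifier's randomness; the balanced/folded structure annihilates all but the genuinely correlated terms, and the key cross term $\E[a\,b\,c]$ survives only through characters $\chi_S$ on one side paired with $\chi_T$ on the other where $T=\pi_e(S)$. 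Bounding it by a sum of Fourier weights and using the noise operator to force $|S|$ small yields a randomized decoding: sample a label with probability proportional to the squared Fourier coefficients. If the verifier accepted with probability exceeding $\tfrac{7}{8}+\delta$, this decoding satisfies a constant (in $\delta$) fraction of the Label Cover edges, which for $\eta$ small enough contradicts {\NP}-hardness of the gap Label Cover problem.

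The hard part will be exactly this Fourier-analytic soundness step: controlling $\E[a\,b\,c]$ uniformly over all adversarial proof tables, and arguing that any non-negligible correlation decodes back into a Label Cover labeling of value beating $\eta$. Once that is in hand, the final translation is routine --- each of the verifier's random strings contributes one clause of three literals on the proof bits (treated as Boolean variables), the number of clauses is polynomial, and the maximum fraction of simultaneously satisfiable clauses equals the maximum acceptance probability of the verifier. Thus the completeness/soundness gap $1$ versus $\tfrac{7}{8}+\delta$ transfers verbatim to {\MAXESAT}, which is the claimed statement.
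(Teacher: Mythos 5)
The paper does not prove this statement: it is quoted from H{\aa}stad \cite{h01} as a black-box ingredient for the reduction in Theorem~\ref{thm:hardness}, so there is no in-paper proof to compare against. Judged on its own, your proposal is a faithful outline of H{\aa}stad's actual argument: gap Label Cover from the PCP theorem plus parallel repetition, long codes with folding, a $3$-query verifier whose acceptance predicate is literally a $3$-clause, arithmetization of the OR predicate as $\frac{7}{8}$ plus correlation terms, and a Fourier-analytic decoding that converts any $\delta$ advantage of the verifier into a Label Cover labeling of value exceeding $\eta$. The final translation (one clause per random string, acceptance probability equals satisfiable fraction) is also correct.

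One point deserves more care, because it is exactly where a naive execution of your sketch proves a weaker theorem than the one stated. You write that the verifier ``samples $x,y$ and a noise string'' and that perfect completeness is ``immediate.'' If the noise is applied independently to the third query, as in H{\aa}stad's $3$-LIN test, then honest dictator proofs are occasionally hit by the noise, completeness drops to $1-\epsilon$, and you only obtain hardness of distinguishing $(1-\epsilon)$-satisfiable from $(7/8+\delta)$-satisfiable instances --- not the ``satisfiable versus $7/8+\delta$'' statement claimed here. To retain perfect completeness, H{\aa}stad uses a correlated query distribution in which the perturbation is folded into the definition of the third function conditionally on the values of the first two, so that correct long-code tables never produce the forbidden pattern while the perturbation still supplies the high-degree damping needed in the soundness analysis. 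Your outline asserts the conclusion of that design without making the design choice; with that single correction the plan matches the known proof.
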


\begin{theorem}[\cite{h01,mr10}]
Assume {\ETH} holds. For every $\delta>0$, there is no $2^{o(n^{1-o(1)})}$ time algorithm to distinguish a satisfiable instance of {\MAXESAT} from an instance where at most a fraction $7/8+\delta$ of the clauses can be simultaneously satisfied.
\end{theorem}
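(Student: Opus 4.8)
The plan is to argue by contraposition: I would show that any sufficiently fast algorithm for the gap version of {\MAXESAT} yields a too-fast algorithm for {\SAT}, contradicting {\ETH}. The starting point is the sharpened form of {\ETH} obtained from the Sparsification Lemma of \cite{ipz98}: under {\ETH} there is no $2^{o(n)}$-time algorithm for {\SAT} even when the number of clauses $m$ is linear in the number of variables $n$, hence no $2^{o(s)}$-time algorithm on instances of total size $s = n+m = \Theta(n)$. Everything then reduces to building a gap-producing reduction from such {\SAT} instances to gap-{\MAXESAT} whose size blow-up is as small as possible, since the strength of the resulting lower bound is governed entirely by this blow-up.

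The reduction is the composition of two classical constructions. First I would invoke the near-linear-size, two-query projection PCP of \cite{mr10} (Moshkovitz--Raz): it maps a {\SAT} instance of size $s$ to a label-cover / projection-game instance of size $s^{1+o(1)}$, with perfect completeness, sub-constant soundness error, and an alphabet of size $s^{o(1)}$. Second, I would apply Håstad's long-code-based reduction \cite{h01}, the very one underlying the preceding theorem, which turns a projection game of small soundness into a {\MAXESAT} instance with the sharp gap $(1,\,7/8+\delta)$: a satisfiable game yields a fully satisfiable {\MAXESAT} instance, while a game of soundness $\varepsilon$ yields an instance in which no assignment satisfies more than a $7/8+\delta$ fraction of clauses, once $\varepsilon$ is small enough as a function of $\delta$. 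The point is to run Håstad's Fourier-analytic soundness analysis against the sub-constant soundness supplied by \cite{mr10}, while keeping the long-code alphabet small (polylogarithmic) so that this second step costs only an $s^{o(1)}$ multiplicative factor.

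Composing the two reductions maps a {\SAT} instance of size $s$ to a gap-{\MAXESAT} instance of size $n = s^{1+\gamma(s)}$ with $\gamma(s) = o(1)$. Inverting the blow-up gives $s = n^{1-\gamma'(n)}$ with $\gamma'(n) = \gamma/(1+\gamma) = o(1)$, so that $n^{1-\gamma'(n)} = s$ exactly. Now suppose some algorithm distinguishes satisfiable {\MAXESAT} instances from $(7/8+\delta)$-satisfiable ones in time $2^{o(n^{1-o(1)})}$; choosing the vanishing exponent to be $\gamma'(n)$, it runs in time $2^{o(n^{1-\gamma'(n)})} = 2^{o(s)}$ on the reduced instance, and by completeness and soundness of the reduction this decides the original {\SAT} instance in time $2^{o(s)}$, contradicting the sharpened {\ETH} above.

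I expect the main obstacle to be the size bookkeeping rather than any new combinatorial idea: one must verify that both the Moshkovitz--Raz PCP and Håstad's long-code gadget contribute only an $n^{o(1)}$ factor, because a genuinely polynomial blow-up with exponent bounded away from $1$ would degrade the conclusion from $2^{o(n^{1-o(1)})}$ to a weaker $2^{o(n^{c})}$ with $c<1$. Concretely, the care is needed in (i) controlling the alphabet size through the long-code step so it stays $n^{o(1)}$, and (ii) matching the sub-constant soundness of \cite{mr10} to the error that \cite{h01} requires for the fixed gap $7/8+\delta$. The completeness and soundness guarantees of the gap itself are inherited verbatim from the two cited theorems, so no fresh analysis of the $7/8+\delta$ threshold is required.
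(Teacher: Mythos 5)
Your sketch is correct and is exactly the argument that the paper's citation to \cite{h01,mr10} encodes: the paper itself gives no proof of this theorem (it is imported as a known result), and the intended derivation is precisely your composition of the Sparsification Lemma form of {\ETH} with the near-linear-size sub-constant-error projection PCP of Moshkovitz--Raz and H{\aa}stad's long-code gadget for the $7/8+\delta$ gap, with the $s^{1+o(1)}$ size bookkeeping carrying the $2^{o(n^{1-o(1)})}$ bound. No gaps beyond the ones you already flag as routine verification.
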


We use {\MAXESATB} to denote the restricted special case of {\MAXSAT} where every variable occurs in at most $B$ clauses. H{\aa}stad \cite{h00} proved that the problem is approximable to within a factor $7/8+1/(64B)$ in polynomial time, and that it is hard to approximate within a factor $7/8+1/(\log B)^{\Omega(1)}$. In 2001, Trevisan improved the hardness result,
\begin{theorem}[\cite{t01}]
\label{thm:3satb}
Unless {\RP}={\NP}, there is no polynomial time $(7/8+5/\sqrt{B})$-approximate algorithm for {\MAXESATB}. 
\end{theorem}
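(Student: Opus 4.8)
The plan is to prove this by a gap-preserving reduction from the unbounded-occurrence hardness of \MAXESAT{} (the $7/8+\delta$ gap of H{\aa}stad) to the bounded-occurrence case \MAXESATB{}, using expander graphs as consistency gadgets. The starting point is an \MAXESAT{} instance $\phi$ on which it is \NP-hard to distinguish the satisfiable case from the case where at most a $7/8+\delta$ fraction of the clauses are simultaneously satisfiable. The only defect of $\phi$ is that a variable $x$ may occur in arbitrarily many clauses; the whole job of the reduction is to redistribute these occurrences among many fresh copies while \emph{forcing} the copies to agree, paying only a small additive loss in the achievable fraction of satisfied clauses.

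First I would replace each variable $x$ occurring $k$ times by $k$ distinct copies $x_1,\dots,x_k$, substituting the $\ell$-th occurrence of $x$ by $x_\ell$. After this substitution each literal occurs exactly once in the original clauses, but nothing yet prevents the copies from taking inconsistent values. To enforce consistency I would place a $d$-regular expander graph $G_x$ on the vertex set $\{x_1,\dots,x_k\}$ and, for each edge $(x_i,x_j)$ of $G_x$, add an equality constraint $x_i=x_j$. Each such equality is encoded as clauses of exactly three literals (padding the two implications $x_i\vee\neg x_j$ and $\neg x_i\vee x_j$ with a fixed auxiliary literal, or via a small constant-size gadget), so the output is a genuine \MAXESAT{} instance. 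Each copy now participates in its one original clause plus $O(d)$ consistency clauses, hence every variable occurs in at most $B=O(d)$ clauses, giving the bounded-occurrence property.

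Completeness is immediate: a satisfying assignment of $\phi$ lifts to one that sets all copies of each variable equal, satisfying every original clause and every consistency clause. The soundness direction is the technical heart and the step I expect to be the main obstacle. Given an assignment $\sigma$ to the reduced instance, I would ``fold'' it to an assignment $\bar\sigma$ of $\phi$ by majority vote among the copies of each variable. The expander mixing lemma, through the spectral gap with $\lambda_2(G_x)\le 2\sqrt{d-1}$ for a near-Ramanujan expander, guarantees that the number of consistency edges violated by $\sigma$ is at least a constant times the number of ``minority'' copies that must be flipped; hence flipping them to the majority value loses at most a $\Theta(1/\sqrt{d})=\Theta(1/\sqrt{B})$ fraction of the originally satisfied clauses. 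Consequently, if $\bar\sigma$ satisfied more than a $7/8+\delta$ fraction of $\phi$, then $\sigma$ would already satisfy more than a $7/8+\delta-\Theta(1/\sqrt{B})$ fraction of the reduced instance.

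Putting the two directions together, a satisfiable $\phi$ yields a satisfiable \MAXESATB{} instance, while a $\phi$ of value at most $7/8+\delta$ yields an instance of value at most $7/8+c/\sqrt{B}$ for an absolute constant $c$. Choosing the expander degree so that $B$ matches the prescribed occurrence bound and absorbing $\delta$ into the constant, a polynomial-time $(7/8+5/\sqrt{B})$-approximation would distinguish the two cases and hence decide \SAT; the randomization underlying the gadget and expander constructions is exactly where \RP{} (rather than $\mathsf{P}$) enters, giving the conclusion \RP{}$=$\NP. The delicate point throughout is tracking the exact constant: it requires optimizing the expander degree against the spectral bound $2\sqrt{d-1}$ so that the soundness loss is provably below $5/\sqrt{B}$ while keeping each variable's occurrence at most $B$.
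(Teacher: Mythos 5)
The paper does not prove this statement: it is quoted verbatim from Trevisan \cite{t01} as a black-box ingredient for Theorem~\ref{thm:hardness}, so the only proof to compare yours against is Trevisan's original one, and your route is different from it --- and, unfortunately, it cannot reach the stated bound. The problem is quantitative, not a matter of ``tracking the exact constant.'' Count the clauses of your reduced instance: the $m$ original clauses survive as $m$ clauses, while the consistency gadgets contribute $\Theta(d)$ equality clauses per occurrence, i.e.\ $\Theta(dm)$ clauses in total, and \emph{all} of the consistency clauses are simultaneously satisfiable (set every copy of each variable equal). Hence in the soundness case, even if the majority-folding argument works perfectly, the reduced instance has optimum at least $1 - \frac{(1/8-\delta)m}{m+\Theta(dm)} = 1 - \Theta(1/B)$. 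The gap you produce is therefore between $1$ and $1-\Theta(1/B)$, not between $1$ and $7/8+5/\sqrt{B}$; a $(7/8+5/\sqrt{B})$-approximation algorithm is far too weak to distinguish these two cases, so your reduction only rules out $(1-c/B)$-approximation. No choice of expander degree repairs this: pushing the inapproximability factor down toward $7/8$ requires the consistency clauses to be a vanishing fraction of the instance, which forces the gadget degree (and hence its enforcement power) to vanish as well.

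Trevisan's actual proof dispenses with consistency gadgets entirely: the occurrences of each variable are distributed among fresh copies via a random bipartite graph with disperser/sampler properties, no equality clauses are added (so the total number of clauses stays $\Theta(m)$ and the $7/8$ threshold is preserved), and the $O(1/\sqrt{B})$ loss comes from a second-moment/concentration analysis of how decoding the copies back to a single assignment perturbs the fraction of satisfied clauses. The randomness of that graph is also the genuine reason the hypothesis is {\RP}={\NP} rather than $\mathsf{P}=${\NP}; your construction could use explicit Ramanujan expanders and would be fully deterministic, which is another sign it is not the reduction behind this statement. A smaller but real defect: padding the binary clauses $x_i\vee\neg x_j$ with ``a fixed auxiliary literal'' makes that auxiliary variable occur in $\Theta(dm)$ clauses, destroying the very occurrence bound you are constructing; you would need fresh auxiliaries per equality and the standard $(a\vee b\vee w)\wedge(a\vee b\vee\neg w)$ doubling, which further dilutes the gap.
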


\begin{theorem}[\cite{h01,t01,mr10}]\label{thm:hardness_max_esatb}
\label{thm:eth-3sat}
Unless {\ETH} fails, there is no $2^{o(n^{1-o(1)})}$ time $(7/8+5/\sqrt{B})$-approximate algorithm for {\MAXESATB}. 
\end{theorem}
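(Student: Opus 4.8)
The plan is to obtain Theorem~\ref{thm:eth-3sat} by composing two ingredients already available in the excerpt: the {\ETH}-based time lower bound for the \emph{unbounded} gap version of {\MAXESAT} (the theorem stated just above combining \cite{h01,mr10}), and the gap-preserving degree-reduction reduction that underlies Trevisan's polynomial-time hardness Theorem~\ref{thm:3satb}. The whole point is that Trevisan's reduction is not merely gap-preserving but also \emph{near-linear in size}, so that the sharp exponent $n^{1-o(1)}$ survives the passage from the unbounded to the bounded-occurrence problem.

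First I would fix the starting point. By the preceding theorem, under {\ETH} there is no $2^{o(n^{1-o(1)})}$-time algorithm distinguishing a satisfiable {\MAXESAT} instance $\phi$ from one in which at most a $7/8+\delta$ fraction of clauses are simultaneously satisfiable. The quantitative feature I must retain is that the hard instance produced by the Moshkovitz--Raz PCP \cite{mr10} has $n$ variables and only $m = n^{1+o(1)}$ clauses; this near-linear size is precisely what the classical blow-up would destroy and what makes the argument go through.

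Next I would apply the transformation proving Theorem~\ref{thm:3satb}. Given $\phi$, for each variable $x$ that occurs $d_x$ times I introduce $d_x$ fresh copies, one per occurrence, rewrite the original clauses on the copies, and force the copies to agree by placing them on a constant-degree expander and adding \emph{equality} constraints (encoded as E3SAT clauses) along its edges. Each variable then occurs at most $B = O(1)$ times, so the resulting $\phi'$ is a {\MAXESATB} instance. Completeness is immediate: a satisfying assignment of $\phi$ lifts by setting all copies of $x$ equal. For soundness, the expander mixing lemma ensures that any assignment in which a non-negligible fraction of some variable's copies disagree violates a constant fraction of the equality constraints; optimizing the relative weights of original versus equality clauses yields Trevisan's bound, namely that a $(7/8+\delta)$-unsatisfiable $\phi$ produces a $\phi'$ in which no assignment satisfies more than a $7/8 + 5/\sqrt{B}$ fraction of the clauses. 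The new contribution beyond restating Theorem~\ref{thm:3satb} is the size bookkeeping: the number of variables of $\phi'$ is $n' = \sum_x d_x = 3m = n^{1+o(1)}$, since each E3SAT clause contributes exactly three literal occurrences, and the constant degree of the expander adds only $O(n')$ equality clauses. Hence $n' = n^{1+o(1)}$, so $n'^{\,1-o(1)} = n^{1-o(1)}$ and $2^{o(n'^{\,1-o(1)})} = 2^{o(n^{1-o(1)})}$. A hypothetical $2^{o(n'^{\,1-o(1)})}$-time $(7/8+5/\sqrt{B})$-approximation for {\MAXESATB}, run on $\phi'$, would then distinguish the two cases of $\phi$ in time $2^{o(n^{1-o(1)})}$, contradicting the {\ETH}-hardness of the unbounded gap problem.

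I expect the main obstacle to be controlling the gap and the size simultaneously. The transfer of the exponent relies entirely on the near-linear bound $n' = n^{1+o(1)}$, which forces the consistency gadget to add only $O(1)$ constraints per copy --- hence the insistence on a sparse constant-degree expander rather than a complete equality graph. Keeping the soundness loss down to $5/\sqrt{B}$ while using so sparse a gadget is the delicate quantitative heart of Trevisan's analysis, and verifying that it coexists with near-linear size (so that both \cite{mr10} and the expander gadget contribute only $n^{o(1)}$ overhead) is the step requiring the most care.
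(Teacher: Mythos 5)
The paper does not actually prove this theorem: it is imported as a black box from \cite{h01,t01,mr10}, and the paper's own contribution (Theorem~\ref{thm:hardness}, Corollary~\ref{cor:eth}) only begins downstream of it. So there is no in-paper proof to compare against; what you have written is a reconstruction of how the cited result is assembled in the literature, and the architecture is the right one: take the {\ETH}-hardness of gap-{\MAXESAT} (which rests on the near-linear-size PCP of \cite{mr10}), push it through Trevisan's degree reduction, and observe that the reduction is linear-size so the exponent $n^{1-o(1)}$ survives. The size bookkeeping ($n' = 3m = n^{1+o(1)}$ copies, a constant-degree consistency gadget contributing only $O(n')$ clauses) is exactly the point that has to be made, and you make it correctly.

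Two caveats keep this from being a self-contained proof. First, you invoke rather than establish the quantitative core of Theorem~\ref{thm:3satb}: a generic expander-mixing argument plus ``optimizing the relative weights'' does not by itself produce the constant $5/\sqrt{B}$ --- that number comes from Trevisan's specific amplifier construction and its analysis, and in an unweighted instance the weighting must be realized by clause duplication, which feeds back into the occurrence bound $B$. You flag this yourself, which is honest, but it means the proposal is a composition of two cited theorems rather than a proof. Second, the amplifiers in \cite{t01} are obtained probabilistically (hence the {\RP} in Theorem~\ref{thm:3satb}); to assert the {\ETH} version without a randomized hypothesis you need either an explicit gadget with the same parameters or an argument that the randomness can be removed within the stated time bound --- your sketch is silent on this. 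A smaller syntactic point: equality constraints along expander edges are naturally 2-clauses, so landing in {\MAXESATB} (exactly three literals per clause) requires padding, which again perturbs the constants. None of this breaks the argument, but these are precisely the places where the ``delicate quantitative heart'' you defer to actually lives.
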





\subsection{Our results}
We provide a hardness of approximation result for the neural network inversion problem. In particular, we prove unless {\RP}={\NP}, there is no polynomial time that can approximately recover the input of a two-layer neural network with ReLU activation function\footnote{We remark there is a polynomial time algorithm for one layer ReLU neural network recovery}. Formally, consider the inversion problem
\begin{align}
\label{eq:recovery}
h(x) = z, \quad x \in [-1, 1]^{d},
\end{align}
where $z \in \R^{m_2}$ is the hidden layer representation, $h$ is a two neural network with ReLU gates, specified as
\begin{align*}
    h(x) = W_2 \sigma(W_1 x + b), \quad W_2 \in \R^{m_2\times m_1}, W_1 \in \R^{m_1 \times d}, b\in \R^{m_1}
\end{align*}
We want to recover the input data $x \in [-1,1]^{d}$ given hidden layer representation $z$ and all parameters of the neural network (i.e., $W^{(1)}, W^{(2)}, b$). 
It is known the decision version of neural network inversion problem is $\mathsf{NP}$-hard~\cite{ljdd19}. It is an open question whether approximation version is also hard. We show a stronger result which is, it is hard to give to constant approximation factor. 
Two notions of approximation could be consider here, one we called {\em solution approximation}
\begin{definition}
[Solution approximation]
\label{def:approx1}
Given a neural network $h$ and hidden layer representation $z$, we say $x'\in [-1, 1]^{d}$ is an $\epsilon$ approximation solution for Eq.~\eqref{eq:recovery}, if there exists $x\in [-1,1] \in\R^{d}$, such that 
\begin{align*}
\| x - x' \|_2 \leq \epsilon \sqrt{d}  \text{  and  } h(x) = z.
\end{align*}
\end{definition}
Roughly speaking, solution approximation means we recovery an approximate solution. The $\sqrt{d}$ factor in the above definition is a normalization factor and it is not essential.

One can also consider a weaker notion, which we called {\em function value approximation}
\begin{definition}
[Function value approximation]
\label{def:approx2}
Given a neural network $h$ and hidden layer representation $z$, we say $x'\in [-1, 1]^{d}$ is $\epsilon$-approximate of value to Eq.~\eqref{eq:recovery}, if 
\begin{align*}
\| h(x') - y \|_2 \leq \epsilon \sqrt{m_2} .
\end{align*}
\end{definition}
Again, the $\sqrt{m_2}$ factor is only for normalization. Suppose the neural network is $G$-Lipschitz continuous for constant $G$ (which is the case in our proof), then an $\epsilon$-approximate solution implies $G\epsilon$-approximation of value. 
For the purpose of this paper, we focus on the second notion (i.e., function value approximation). 
Given our neural network is (constant)-Lipschitz continuous, this immediately implies hardness result for the first one.


Our theorem is formally stated below. In the proof, we reduce from {\MAXSATB} and utilize Theorem~\ref{thm:3satb}
\begin{theorem}
\label{thm:hardness}
There exists a constant $B > 1$, unless {\RP} = {\NP}, it is hard to $\frac{1}{60B}$-approximate Eq.~\eqref{eq:recovery} .
Furthermore, the neural network is $O(B)$-Lipschitz continuous, and therefore, it is hard to find an $\Omega(1/B^2)$ approximate solution to the neural network.
\end{theorem}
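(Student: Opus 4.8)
The plan is to reduce from \MAXESATB{} and invoke Theorem~\ref{thm:3satb}: unless \RP${}={}$\NP, no polynomial-time algorithm can distinguish a satisfiable \MAXESATB{} instance from one in which at most a $7/8+5/\sqrt{B}$ fraction of the clauses are simultaneously satisfiable. Given such an instance $\phi$ with $n$ variables and $m$ clauses, I would construct a two-layer ReLU network $h(x)=W_2\sigma(W_1 x+b)$ on the domain $x\in[-1,1]^n$ (so $d=n$) together with a target $z$, using two families of hidden units. For each clause $C_k=(\ell_{k,1}\vee\ell_{k,2}\vee\ell_{k,3})$, write the $j$-th literal value as $(1+s_{k,j}x_{i_{k,j}})/2$ with sign $s_{k,j}\in\{\pm1\}$, and install a single clause neuron $a_k=\sigma\!\big(1-\sum_{j=1}^3(1+s_{k,j}x_{i_{k,j}})/2\big)$; at a Boolean point $x\in\{-1,1\}^n$ this neuron equals $1$ exactly when $C_k$ is violated and $0$ when $C_k$ is satisfied. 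For each variable $i$, install the two neurons $\sigma(x_i)$ and $\sigma(-x_i)$, so that one output row can read the integrality defect through $|x_i|=\sigma(x_i)+\sigma(-x_i)$. The target $z$ demands that every clause output be $0$ and every $|x_i|$ be $1$, so that $\|h(x)-z\|_2^2$ is a (weighted) sum of squared clause violations $a_k^2$ and squared integrality defects $(|x_i|-1)^2$.

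Completeness is immediate: a satisfying Boolean assignment $x^\star$ zeroes every clause neuron and has $|x^\star_i|=1$ for all $i$, hence $h(x^\star)=z$ and the inversion objective is exactly $0$. The heart of the argument is soundness, where I must show that if $\phi$ is only $(7/8+5/\sqrt{B})$-satisfiable then every $x'\in[-1,1]^n$ has $\|h(x')-z\|_2$ bounded away from $0$. I would round $x'$ coordinatewise to $\hat x=\sign(x')\in\{-1,1\}^n$; by the soundness promise this $\hat x$ leaves a set $V$ of at least $(1/8-5/\sqrt{B})m$ clauses unsatisfied. For $k\in V$ all three literals are false under $\hat x$, and writing $x'_i=\hat x_i\,|x'_i|$ the clause neuron collapses to $a_k(x')=\max\{0,(\sum_{j}|x'_{i_{k,j}}|-1)/2\}$. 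This exposes the governing trade-off: a clause that is violated after rounding can avoid its $\Omega(1)$ clause penalty only by driving $\sum_j|x'_{i_{k,j}}|$ below $1$, i.e.\ by keeping its variables fractional, but each fractional variable then pays an integrality defect $(|x'_i|-1)^2=\Omega(1)$.

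Turning this trade-off into a quantitative lower bound is the main obstacle, and it is exactly where the bounded-occurrence parameter $B$ enters. I would partition $V$ into clauses whose literal mass $\sum_j|x'_{i_{k,j}}|$ exceeds a fixed threshold, each of which contributes $\Omega(1)$ through $a_k(x')^2$, and clauses below the threshold, each of which must contain a genuinely fractional variable. Because every variable occurs in at most $B$ clauses, the below-threshold clauses number at most $B$ times the count of fractional variables, so they force a total integrality defect of $\Omega(|V|/B)$. Adding the two cases gives $\|h(x')-z\|_2^2=\Omega(|V|/B)=\Omega(m/B)$; since $m_2=m+n=\Theta(m)$, this is a per-coordinate gap of $\Omega(1/\sqrt{B})$, so that distinguishing the two cases — and hence any value-approximation algorithm achieving error below this gap — is impossible, which establishes the stated $\tfrac{1}{60B}$ hardness once the relative weights of the two gadget families and the $5/\sqrt{B}$ slack are accounted for. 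Finally, each first-layer neuron reads at most three inputs and each input feeds at most $B+2$ neurons, so a column/row bound on the Jacobian makes $h$ at worst $O(B)$-Lipschitz; combined with the remark before Definition~\ref{def:approx2} that a solution-approximation of $\epsilon$ yields only a value-approximation of $G\epsilon$, the $\tfrac{1}{60B}$ value hardness transfers to an $\Omega(1/B^2)$ solution-approximation hardness. The reduction is deterministic, so the \RP${}={}$\NP{} conclusion is inherited directly from Theorem~\ref{thm:3satb}.
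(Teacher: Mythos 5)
Your reduction is, at the level of the construction, the same as the paper's: both reduce from {\MAXESATB} via Theorem~\ref{thm:3satb}, both use one first-layer ReLU per clause whose output is $0$ or $1$ at Boolean points according to whether the clause is satisfied, both add $\sigma(x_i)+\sigma(-x_i)$ gadgets with target $1$ so that $\|h(x)-z\|_2^2$ charges an integrality defect $(|x_i|-1)^2$, and both round a fractional $x'$ by sign and use the bounded-occurrence parameter $B$ to control the interaction between rounding and clause violations. Where you genuinely differ is the soundness accounting. The paper makes $100B^2$ copies of each $|x_i|$ gadget and bounds the \emph{difference} $\|h(\bar{x})-z\|_2^2-\|h(x)-z\|_2^2$ by maximizing $2B\Delta_i-100B^2\Delta_i^2$ termwise, so the amplified quadratic defect dominates the linear cross-term coming from the at-most-$B$ clauses per variable; you instead keep a single defect coordinate per variable and lower-bound $\|h(x')-z\|_2^2$ directly by splitting the violated clauses by literal mass: above-threshold clauses pay $\Omega(1)$ through $a_k^2$, each below-threshold clause contains a variable with $|x'_i|$ bounded away from $1$, and the $B$-bounded occurrence converts that into an $\Omega(|V|/B)$ total defect. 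Your version needs only $m_2=m+n$ output coordinates and in fact yields a per-coordinate gap of $\Omega(1/\sqrt{B})$ rather than $\Omega(1/B)$, which is stronger than the stated $\frac{1}{60B}$; the paper's version avoids the case split at the cost of a $\Theta(B^2n)$ blow-up in width. The remaining pieces --- completeness, the $O(B)$ Lipschitz bound via row/column norms of $W_1,W_2$, and the transfer from value-approximation to solution-approximation using the Lipschitz constant --- coincide with the paper's. The only thing you leave implicit is the choice of threshold and weights, but taking the threshold to be $2$ with unit weights already makes the constants work, so I see no gap.
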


Using the above theorem, we can see that by taking a suitable constant $B > 1$, the neural network inversion problem is hard to approximate within some constant factor under both definitions. In particular, we conclude
\begin{theorem}[Formal statement of Theorem~\ref{thm:rp}]
Assume $NP \neq RP$, there exists a constant $\epsilon > 0$, such that there is no polynomial time algorithm that is able to give an $\epsilon$-approximation to neural network inversion problem.
\end{theorem}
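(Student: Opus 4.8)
The plan is to establish the main technical result, Theorem~\ref{thm:hardness}, and then deduce the formal version of Theorem~\ref{thm:rp} as an easy corollary. The core of the work is a gap-preserving reduction from \MAXESATB{} to the neural network inversion problem of Eq.~\eqref{eq:recovery}. Given a \MAXESATB{} instance $\phi$ on $d$ Boolean variables and $m$ clauses, where each variable appears in at most $B$ clauses, I would construct a two-layer ReLU network $h(x) = W_2 \sigma(W_1 x + b)$ together with a target vector $z$ so that recovering an input $x \in [-1,1]^d$ with $h(x)$ close to $z$ corresponds to finding a Boolean assignment satisfying many clauses. The natural encoding identifies the coordinate $x_i \in [-1,1]$ with the truth value of variable $i$ (say $x_i = 1$ for true, $x_i = -1$ for false), and uses the first layer $\sigma(W_1 x + b)$ to compute, for each clause, a penalty gadget that is zero exactly when the clause is satisfied by a Boolean $\{-1,+1\}$ assignment and strictly positive otherwise. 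The second layer $W_2$ then aggregates these per-clause penalties, and $z$ is chosen to be the all-satisfied target.

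The key steps, in order, would be: first, design the clause gadget. For a clause $C$ on three literals, I want a small ReLU circuit whose output measures the clause's violation; using the standard trick, the quantity $\mathrm{relu}(1 - \ell_1 - \ell_2 - \ell_3)$ or a similar affine-then-ReLU combination vanishes on satisfying assignments and is bounded below by a positive constant on violating $\pm 1$ assignments, while remaining Lipschitz. Second, I would argue \emph{soundness and completeness of the gap}: if $\phi$ is satisfiable, the corresponding $\pm 1$ input achieves $h(x) = z$ exactly, giving the ``yes'' case; conversely, if at most a $(7/8+\delta)$ fraction of clauses can be satisfied, then for \emph{every} $x \in [-1,1]^d$ the aggregated violation is bounded away from zero by $\Omega(m)$, so after the $\sqrt{m_2}$ normalization the residual $\|h(x) - z\|_2$ exceeds the claimed $\frac{1}{60B}$ threshold. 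Because the variables are bounded-occurrence ($B$), each variable feeds into at most $B$ gadgets, which keeps the operator norms of $W_1, W_2$ (and hence the Lipschitz constant) at $O(B)$; this is what yields the $O(B)$-Lipschitz claim and, via the Lipschitz transfer noted before Definition~\ref{def:approx2}, converts function-value hardness into the $\Omega(1/B^2)$ solution-approximation hardness. Third, I would invoke Theorem~\ref{thm:3satb}: since distinguishing the satisfiable from the $(7/8+5/\sqrt{B})$-satisfiable case of \MAXESATB{} is \NP-hard unless \RP${}={}$\NP, and our reduction maps exactly these two cases to an inversion instance separated by a constant $\frac{1}{60B}$ in function value, any polynomial-time $\frac{1}{60B}$-approximation algorithm for inversion would decide \MAXESATB{}, giving the contradiction.

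The main obstacle I anticipate is \emph{making the gap survive relaxation from the Boolean cube to the continuous box $[-1,1]^d$}. The reduction must guarantee that fractional inputs cannot ``cheat'' the clause gadgets—an adversary could set coordinates to intermediate values and potentially lower the aggregate violation below what any genuine $\pm 1$ assignment allows. Controlling this requires that each clause gadget's violation, as a function on the continuous box, be minimized at a vertex of the cube (or at least that its continuous minimum still respect the combinatorial gap up to the normalization constants). This typically forces a careful choice of the affine offsets $b$ and the gadget slopes so that the piecewise-linear penalty is convex-like in each coordinate and cannot be reduced by interpolation; getting the constants to line up so the final separation is a clean $\frac{1}{60B}$ after dividing by $\sqrt{m_2}$ is the delicate bookkeeping. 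Once Theorem~\ref{thm:hardness} is in hand, the formal statement of Theorem~\ref{thm:rp} follows immediately: fix $B$ to be any constant exceeding $1$ for which the reduction applies, set $\epsilon = \frac{1}{60B}$, and observe that a polynomial-time $\epsilon$-approximation would contradict Theorem~\ref{thm:hardness} under the assumption \NP${}\neq{}$\RP.
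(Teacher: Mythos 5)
Your overall architecture matches the paper's: a gap-preserving reduction from {\MAXESATB} (via Trevisan's Theorem~\ref{thm:3satb}) to the inversion problem of Eq.~\eqref{eq:recovery}, with one first-layer ReLU neuron per clause computing $\sigma(W^{(1)}_j x + b_j)$ that is $0$ on satisfying $\pm 1$ assignments and $1$ on violating ones, completeness/soundness of the gap, an $O(B)$-Lipschitz bound from bounded occurrence, and the final derivation of the theorem by fixing $B$ and setting $\epsilon = \frac{1}{60B}$. You also correctly identify the one genuinely delicate point: fractional inputs in $[-1,1]^d$ cheating the clause gadgets.

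However, your proposed fix for that point does not work, and the paper's actual mechanism is missing from your proposal. You suggest choosing offsets and slopes so that each clause gadget's penalty is ``convex-like in each coordinate and minimized at a vertex.'' A single affine-then-ReLU gadget is convex, and any convex function vanishing on the seven satisfying vertices of the cube vanishes on their convex hull, which contains the center; indeed, with the paper's gadget ($b_j = -2$), the input $x = 0$ makes \emph{every} clause neuron output $0$, so without an additional device the all-zeros vector exactly inverts $z$ regardless of satisfiability and the reduction is void. The paper's essential missing ingredient is an \emph{integrality-forcing block}: it appends $100B^2$ copies of $|x_i| = \sigma(x_i) + \sigma(-x_i)$ per variable to the output layer with target value $1$, so that a fractional coordinate with $\Delta_i = 1 - |x_i|$ incurs squared error $100B^2\Delta_i^2$, which dominates the at most $2B\Delta_i$ that perturbing $x_i$ can save across its $\leq B$ clause gadgets; this yields the quantitative rounding bound $\|h(\bar x) - z\|_2^2 - \|h(x) - z\|_2^2 \leq 2B\sum_i \Delta_i - 100B^2\sum_i \Delta_i^2 \leq \frac{3m}{100}$ that makes the soundness argument go through. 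Without this block (or an equivalent), your reduction has no soundness, so the step from the {\MAXESATB} gap to the $\frac{1}{60B}\sqrt{m_2}$ separation is a genuine gap in the proof.
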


\begin{proof}[Proof of Theorem~\ref{thm:hardness}]
Given an 3SAT instance $\phi$ with $n$ variables and $m$ clause, where each variable appears in at most $B$ clauses, we construct a two layer neural network $h_{\phi}$ and output representation $z$ satisfy the following:
\begin{itemize}
\item Completeness. If $\phi$ is satisfiable, then there exists $x\in [0,1]^{d}$ such that $h_{\phi}(x) = z$.
\item Soundness. For any $x$ such that $\|h_{\phi}(x) - z\|_{2} \leq \frac{1}{60B}\sqrt{m_2}$, we can recover an assignment to $\phi$ that satisfies at least $\left( \frac{7}{8} + \frac{5}{\sqrt{B}}  \right) m$ clauses
\item Lipschitz continuous. The neural network is $O(B)$-Lipschitz.
\end{itemize}
We set $d = n$, $m_1 = m + 200B^{2}n$ and $m_2 = m + 100B^{2}n$. For any $j \in [m]$, we use $\phi_{j}$ to denote the $j$-th clause and use $h_{1, j}(x)$ to denote the output of the $j$-th neuron in the first layer, i.e., $h_{1, j}(x) = \sigma(W^{(1)}_{j}x + b_{i})$, where $W^{(1)}_{j}$ is the $j$-th row of $W^{(1)}$. For any $i \in [n]$, we use $X_i$ to denote the $i$-th variable.

Intuitively, we use the input vector $x \in [-1,1]^{n}$ to denote the variable, and the first $m$ neurons in the first layer to denote the $m$ clauses. By taking 
\begin{align*}
W_{j, i}^{(1)} = \left\{
\begin{matrix}
1, & X_i \in \phi_j;\\
-1, & \bar{X}_{i} \in \phi_{j};\\
0, & \text{otherwise}.
\end{matrix}
\right. \quad \text{and} \quad b_{j} = -2
\end{align*}
for any $i \in [n], j\in [m]$, and viewing $x_i = 1$ as $X_i$ to be false and $x_i = -1$ as $X_i$ to be true. One can verify that $h_{1, j}(x) = 0$ if the clause is satisfied, and $h_{1, j}(x) = 1$ if the clause is unsatisfied. We simply copy the value in the second layer $h_{j}(x) = h_{1, j}(x)$ for $j \in [m]$.

For other neurons, intuitively, we make $100B^{2}$ copies for each $|x_i|$ ($i \in n$) in the output layer. This can be achieved by taking 
\begin{align*}
h_{m + (i - 1)\cdot 100 B^2 + k}(x) = h_{m + (i - 1)\cdot 100 B^2 + k}(x) + h_{1, m + 100 B^2n + (i - 1)\cdot 100 B^2 + k}(x)
\end{align*}
and set 
\begin{align*}
h_{1, m + (i - 1)\cdot 100 B^{2} + k}(x) = \max\{x_i, 0 \} \quad h_{1, m + 100 B^2 n + (i - 1) \cdot 100 B^2 + k}(x) = \max\{-x_i, 0\}
\end{align*}
for any $i \in [n], k \in [100B^{2}]$.
Finally, we set the target output as
\begin{align*}
z = (\underbrace{0, \cdots, 0}_{m}, \underbrace{1, \cdots, 1}_{100B^2n})
\end{align*}


We are left to prove the three claims we made about the neural network $h$ and the target output $z$. For the first claim, suppose $\phi$ is satisfiable and $X= (X_1, \cdots, X_n)$ is the assignment. Then as argued before, we can simply take $x_i = 1$ if $X_i$ is false and $x_i =-1$ is $X_i$ is true. One can check that $h(x) = z$.


For second claim, suppose we are given $x\in [-1, 1]^{d}$ such that 
\begin{align*}
\|h(x) - z\|_{2} \leq \frac{1}{60B}\sqrt{m_2}
\end{align*} 
We start from the simple case when $x$ is binary, i.e., $x \in \{-1, 1\}^{n}$. Again, by taking $X_i$ to be true if $x_i = -1$ and $X_i$ to be false when $x_i = 0$. One can check that the number of unsatisfied clause is at most 
\begin{align}
\label{eq:error1}
\|h(x) - z\|_2^2 
\leq & ~ \frac{1}{3600B^2}m_2 \notag \\
= & ~ \frac{1}{3600B^2}(m + 100B^2n) \notag \\
\leq & ~ \frac{1}{12}m + \frac{1}{3600B^2}m  \\
\leq & ~ \frac{1}{8}m - \frac{5}{\sqrt{B}} m\notag 
\end{align}
The third step follows from $n \leq 3m$, and the last step follows from $B \geq 15000$. 

Next, we move to the general case that $x \in [-1, 1]^{d}$. We would round $x_{i}$ to $-1$ or $+1$ based on the sign. Define $\bar{x} \in\{-1, 1\}^{n}$ as
\begin{align*}
\bar{x}_i = \arg\min_{t\in \{-1, 1\}}|t - x_i|
\end{align*}
We prove that $\bar{x}$ induces an assignment that satisfies $(\frac{7}{8} + \frac{5}{\sqrt{B}})m$ clauses. It suffices to prove
\begin{align}
\label{eq:error2}
\|h(\bar{x}) - z\|_2^{2} - \|h(x) - z\|_2^{2} \leq \frac{3}{100} m
\end{align}
since this implies the number of unsatisfied clause is bounded by
\begin{align*}
\|h(\bar{x}) - z\|_{2}^{2} 
\leq & ~ \|h(x) - z\|_2^{2} +  (\|h(\bar{x}) - z\|_2^{2} - \|h(x) - z\|_2^{2}) \\
\leq & ~ ( \frac{1}{12}m + \frac{1}{36B^2}m ) + \frac{3}{100} m \\
\leq & ~ \frac{1}{8}m - \frac{1}{5\sqrt{B}} m,
\end{align*}
where the second step follow from Eq.~\eqref{eq:error1}\eqref{eq:error2}, and the last step follows from $B \geq 10^{7}$. 

We define $\Delta_i := |\bar{x}_i - x_i| = 1 - |x_i| \in [0, 1]$ and $T : = m + 128B^2n$. Then we have
\allowdisplaybreaks
\begin{align*}
    \|h(\bar{x}) - z\|_2^2- \|h(\bar{x}) - z\|_2^2= &~ \sum_{j=1}^{T}(h_{j}(\bar{x}) - z_j) - (h_j(x) - z_j)^2\\
    = &~ \sum_{j=1}^{m}(h_{j}(\bar{x}) - z_j)^2 - (h_j(x) - z_j)^2 \\ 
    & ~ + \sum_{j=m+1}^{T}(h_{j}(\bar{x}) - z_j)^2 - (h_j(x) - z_j)^2\\
    = &~ \sum_{j=1}^{m}h_{j}(\bar{x})^2 - h_j(x)^2  - 100B^2\sum_{i=1}^{n}\Delta_i^2\\
    \leq &~ 2\sum_{j=1}^{m}|h_{1,j}(\bar{x}) -h_{1,j}(x)|  - 100B^2\sum_{i=1}^{n}\Delta_i^2\\
    \leq &~ 2\sum_{j=1}^{m}\sum_{i \in \phi_j}\Delta_i  - 100B^2\sum_{i=1}^{n}\Delta_i^2\\
    \leq &~ 2B\sum_{i=1}^{n}\Delta_i - 100B^{2}\sum_{i=1}^{n}\Delta_i^2\\
    \leq &~ \frac{n}{100}\\
    \leq&~ \frac{3m}{100}.
\end{align*}
The third step follow from $z_j = 0$ for $j \in [m]$ and for $j \in \{m +1, \cdots, m+100B^2n\}$, $z_j = 1$, $\|h_j(\bar{x}) - z_j\| = 0$ and $\|h_j(x) - z_j\|_2^{2} = \Delta_i$ given $j \in [m + (i - 1)\cdot 100B^2+ 1, i\cdot 100B^2]$. 
The fourth step follows from that $h_{j}(x) = h_{1,j}(x)\in [0,1]$ for $j \in [m]$.
The fifth step follows from the 1-Lipschitz continuity of the ReLU.
The sixth step follows from each variable appears in at most $B$ clause. 
This concludes the second claim.

For the last claim, by the Lipschitz continuity of ReLU, we have for any $x_1, x_2$
\begin{align*}
    h(x_1) - h(x_2) 
    = & ~ W^{(2)}\sigma(W^{(1)}x_1 + b) - W^{(2)}\sigma(W^{(1)}x_2 + b) \\
    \leq & ~ \|W^{(2)}\|\cdot \|W^{(1)}\|\|x_1 - x_2\|_2 
\end{align*}
It is easy to see that 
\begin{align*}
\|W^{(2)}\| \leq 2
\end{align*}
\text{ and }
\begin{align*}
    \|W^{(2)}\| \leq \sqrt{200B^2 + 3B} \leq \sqrt{203 B^2} \leq 15 B ,
\end{align*}
where the second step follows from $B \geq 1$.

Thus concluding the proof.

\end{proof}

By assuming ETH and using Theorem~\ref{thm:eth-3sat}, we can conclude
\begin{corollary}[Formal statement of Corollary~\ref{cor:eth}]
Unless ETH fails, there exists a constant $\epsilon > 0$, such that there is no $2^{o(n^{1 - o(1)})}$ time algorithm that is able to give an $\epsilon$-approximation to neural network inversion problem.
\end{corollary}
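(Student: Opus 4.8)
The plan is to build a gap-preserving reduction from \MAXESATB\ to the neural network inversion problem of Eq.~\eqref{eq:recovery}, and then invoke Theorem~\ref{thm:3satb} to transfer its hardness. Given a \SAT\ instance $\phi$ with $n$ variables and $m$ clauses in which each variable occurs in at most $B$ clauses, I would construct a two-layer ReLU network $h_\phi$ and a target vector $z$ so that the two standard properties hold: \emph{completeness}, meaning a satisfying assignment yields an exact preimage $h_\phi(x)=z$; and \emph{soundness}, meaning any $x$ with $\|h_\phi(x)-z\|_2$ below the approximation threshold $\frac{1}{60B}\sqrt{m_2}$ can be rounded into an assignment satisfying more than a $\bigl(\tfrac{7}{8}+\tfrac{5}{\sqrt{B}}\bigr)$ fraction of the clauses. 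Since Theorem~\ref{thm:3satb} says this fraction is \NP-hard to achieve in polynomial time (unless $\RP=\NP$), an efficient $\frac{1}{60B}$-approximation to the inversion problem would contradict it.

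The heart of the construction is the encoding. I would set $d=n$ and identify each input coordinate $x_i\in[-1,1]$ with the $i$-th Boolean variable via the convention $x_i=-1\mapsto$ true, $x_i=+1\mapsto$ false. For each clause $\phi_j$ I would build one first-layer neuron whose weight row $W^{(1)}_{j,i}$ is $+1$ if $X_i$ appears positively, $-1$ if negatively, and $0$ otherwise, with bias $b_j=-2$; then the ReLU output $h_{1,j}(x)=\sigma(W^{(1)}_jx+b_j)$ evaluates to $0$ exactly when the clause is satisfied by a Boolean $x$ and to $1$ when it is violated, so the first $m$ coordinates of $z$ are set to $0$ and each violated clause contributes $1$ to the squared error. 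The remaining gadget neurons enforce that $x$ is close to Boolean: I would create $100B^2$ copies, for each variable $i$, of both $\max\{x_i,0\}$ and $\max\{-x_i,0\}$ (two ReLU neurons), add the matching pairs in the second layer so that the combined output equals $|x_i|$, and set the corresponding $z$-coordinates to $1$. Taking $m_2=m+100B^2n$ and $m_1=m+200B^2n$, the target is $z=(0,\dots,0,1,\dots,1)$; any coordinate then penalizes $(1-|x_i|)^2$, heavily forcing $|x_i|\to1$.

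The main technical obstacle is the soundness analysis for \emph{fractional} $x\in[-1,1]^n$, since an attacker need not return a Boolean point. The plan is to round $x$ to $\bar x_i=\arg\min_{t\in\{-1,1\}}|t-x_i|$ and to control the rounding damage by proving the key inequality
\begin{align*}
\|h(\bar x)-z\|_2^2-\|h(x)-z\|_2^2\leq \tfrac{3}{100}\,m.
\end{align*}
Writing $\Delta_i:=1-|x_i|\in[0,1]$, the clause terms change by at most $2\sum_j\sum_{i\in\phi_j}\Delta_i\leq 2B\sum_i\Delta_i$ using the $1$-Lipschitzness of ReLU and the degree bound $B$, while the $100B^2$ variable-copy neurons supply a \emph{negative} contribution $-100B^2\sum_i\Delta_i^2$. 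The decisive point is that the concave-in-$\Delta_i$ upper bound $2B\Delta_i-100B^2\Delta_i^2$ is maximized at $\Delta_i=\tfrac{1}{100B}$ with value $\tfrac{1}{100}$, so the net change is at most $n/100\leq 3m/100$ (using $n\leq 3m$). Combining this with the Boolean-case bound from the threshold then pushes the number of unsatisfied clauses below $\tfrac{1}{8}m-\tfrac{5}{\sqrt B}m$ for $B$ a large enough constant, which is exactly the gap Theorem~\ref{thm:3satb} rules out. Finally I would bound the Lipschitz constant by $\|W^{(2)}\|\cdot\|W^{(1)}\|=O(B)$ to convert function-value hardness into solution hardness (Definition~\ref{def:approx1}), and the \ETH\ version (Corollary) follows identically by substituting Theorem~\ref{thm:eth-3sat} for Theorem~\ref{thm:3satb}.
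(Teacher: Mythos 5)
Your proposal is correct and follows essentially the same route as the paper: the identical clause-gadget/variable-copy construction with $d=n$, $m_2=m+100B^2n$, the rounding argument via $\Delta_i=1-|x_i|$ and the $2B\Delta_i-100B^2\Delta_i^2\leq\tfrac{1}{100}$ bound, concluding by swapping in the {\ETH}-based hardness of {\MAXESATB} (Theorem~\ref{thm:eth-3sat}) for the {\NP}$\neq${\RP} version. The paper itself proves Theorem~\ref{thm:hardness} in full and then states that the corollary's proof is analogous, which is exactly the substitution you describe; since the reduction maps $n$ variables to input dimension $d=n$, the $2^{o(n^{1-o(1)})}$ lower bound transfers directly.
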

The proof is similar to Theorem~\ref{thm:hardness}, we omit it here.

\section{Details of Data Recovery Experiments}
\label{sec:recovery}
\subsection{Inversion Model Details for Synthetic Dataset}
\label{app:invert1}
In synthetic experiment, a malicious attacker recover original input data $x \in \R^d$ by solving the the following optimization:
\begin{equation*} \label{eq:attack1}
      x^* = \arg\min_{s \in \R^d} \| h(s) - z \|_1 
\end{equation*}
To estimate the optimal, we run an SGD optimizer with a learning rate of 0.01 and decayed weight $10^{-4}$ for 500 iterations. We test data recovery results on all the 200 testing samples. Namely, we solve the above optimization problems 200 times. Each time for a testing data point.

\subsection{Inversion Model Details for Benchmark Dataset}
\label{app:invert2}
In benchmark experiment, a malicious attacker recover original input data $x \in \R^d$ by solving the the following optimization:
\begin{equation*} \label{eq:attack2}
      x^* = \arg\min_{s \in \R^d} \| h(s) - z \|_2 +  \zeta \sum_{i,j} ( (s_{i+1, j} - s_{i,j})^2 + (s_{i, j+1} - s_{i,j})^2 )^{1/2},
\end{equation*}
where $i,~j$ are the indexes of pixels in an image.

To estimate the optimal, we run an SGD optimizer with a learning rate of 10 and decayed weight $10^{-4}$ for 500 iterations. We used a grid searching on the space of $\zeta$. We find that the best data recovery comes from $\zeta = 0.01$ for SVHN dataset and $\zeta = 10^{-5}$ for MNIST and FashionMNIST by grid search.

\subsection{Quantitative Metrics for Image Similarity Measurement}
\label{sec:metric}
We adopt the following two known metrics to measure the similarity between $x^*$ and $x$:
\begin{itemize}
 \vspace{-2mm}
    \item Normalized structural similarity index metric ({\bf SSIM}), a perception-based metric that considers the similarity between images in structural information, luminance and contrast. It is widely used in image and video compression research to quantify the difference between the original and compressed images.  The detailed calculation can be found in \cite{wbs+04}. We normalize SSIM to take value range $[0,1]$ (original SSIM takes value range $[-1, 1]$).
    \item  Perceptual similarity ({\bf PSIM}). Perceptual loss \cite{johnson2016perceptual} has been widely used for training image generation and style transferring models \cite{johnson2016perceptual,lucas2019generative,wang2018perceptual}. It emerges as a novel measurement for evaluating the discrepancy between high-level perceptual features that extracted by deep learning model of the reconstructed image and ground-truth image. We define PSIM as $1-$ perceptual loss.
    \vspace{-2mm}
\end{itemize}


\section{Additional Experimental Results}
\label{sec:add-exp}
\subsection{Compare Penalty Strategies}
\label{sec:unicon}
A natural approach arise to reduce data separability could be adding a penalty on the pair-wise distance for the data representations within a class. We name this approach as {\unicon}. Its loss function denoted as ${{\cal L}}_{ \mathrm{unicon} }$ can be written as:
    \begin{align*}
        {{\cal L}}_{ \mathrm{unicon} } = & ~ \frac{1}{C} \frac{1}{ |{\cal C}_c| \cdot (|{\cal C}_c|-1) } \sum_{c \in \cal C} \sum_{i \in {\cal C}_c} \sum_{j \in {\cal C}_c} \| h(x_i) - h(x_j) \|_2^2,
    \end{align*}
The final objective function ${\cal L} := {\cal L}_{\mathrm{class}} + \lambda \cdot {\cal L}_{\mathrm{unicon}}$. This approach is similar to contrastive learning \cite{khosla2020supervised}. However, we observed that the approach is not as ideal as our proposed {\mixcon}, in the sense of defending inversion attack. The intuition is that {\mixcon} can induce confusing patterns to fool the neural network learning typical patterns from a class.  Here we show the visualization for the three benchmark datasets in Figure \ref{fig:unicon}. We select $\lambda = 1$ for MNIST and FashionMNIST and  $\lambda = 0.5$ for SVHN in both {\unicon} and {\mixcon}.  Then we choose the $\beta = 1e-4$ for {\mixcon} to match the accuracy to Vanilla and {\unicon}. We use the same training and testing of {\mixcon} for {\unicon} experiment. From the representative samples (while typical to the rest of the data samples), we observe worse data recovery quality of {\mixcon}. Notably, the recovered results from {\unicon} keep the pattern of their class. While {\mixcon} results in more blurred and indistinguishable patterns across classes. We compare the quantitative evaluation results between {\mixcon} and {\unicon} in Table \ref{tab:group_similarity} \footnote{I have presented the comparison between {\mixcon} and vanilla training in Table \ref{tab:similarity}.}.  We use metric SSIM and PSIM to evaluate the similarity between the recovered image and the original image. Lower scores indicate worse data recovery results. The data recovery experiment is performed on 100 testing samples, and we report the mean $\pm$ std and worst case (the best-recovered data) results. Except for the PSIM scores evaluated on MNIST, we get conformable evidence showing {\mixcon} training is apt to defend inversion.
\begin{figure}[h]
    \centering
    \includegraphics[width=\linewidth]{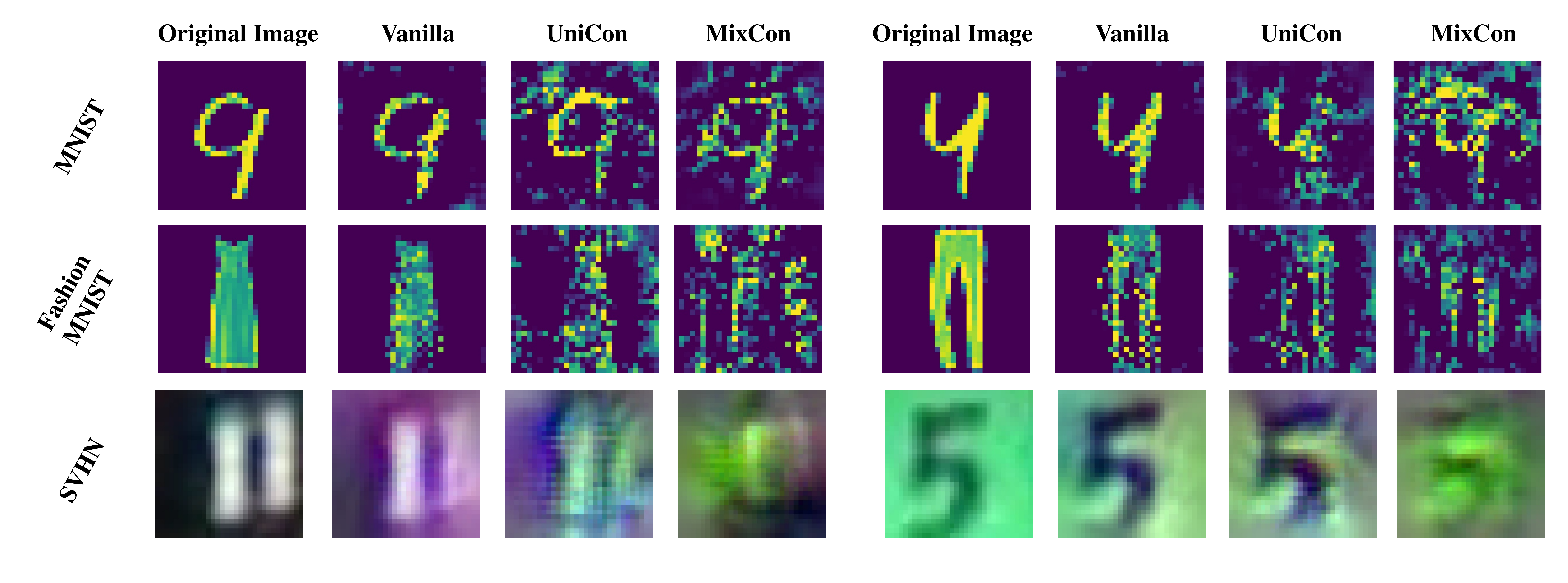}
    \caption{Qualitative evaluation for image inversion results.}
    \label{fig:unicon}
\end{figure}

\begin{table}[htbp]
\centering
\resizebox{\textwidth}{!}{%
\begin{tabular}{l|cc|cc|cc} \toprule
 & \multicolumn{2}{c|}{{\bf MNIST}} & \multicolumn{2}{c|}{{\bf FashionMNIST}} & \multicolumn{2}{c}{{\bf SVHN}} \\
 & \multicolumn{1}{c}{\begin{tabular}[c]{@{}c@{}}{\unicon}\\ $\lambda=1.0$ \end{tabular}} & \multicolumn{1}{c|}{\begin{tabular}[c]{@{}c@{}}{\mixcon}\\ $(\lambda=1.0, \beta = 10^{-4})$\end{tabular}} & \multicolumn{1}{c}{\begin{tabular}[c]{@{}c@{}}{\unicon}\\ $\lambda=1.0$ \end{tabular}} & \multicolumn{1}{c|}{\begin{tabular}[c]{@{}c@{}}{\mixcon}\\ $(\lambda=1.0, \beta = 10^{-4})$\end{tabular}} & \multicolumn{1}{c}{\begin{tabular}[c]{@{}c@{}}{\unicon}\\ $\lambda=0.5$ \end{tabular}} & \multicolumn{1}{c}{\begin{tabular}[c]{@{}c@{}} {\mixcon}\\ $(\lambda=0.5, \beta = 10^{-4})$\end{tabular}}  \\
 \hline
Acc ($\%$) & $99.2$ & $98.6$ & $89.6$ & $88.9$ & $88.3$ & $88.2$  \\
SSIM & $0.31 \pm 0.11(0.59)$  & $ 0.14 \pm 0.11 (0.48)$ & $0.19 \pm 0.09 (0.53)$ & $0.17 \pm 0.09 (0.52)$ & $0.67 \pm 0.11 (0.91)$ & $0.61 \pm 0.15 (0.84)$ \\
PSIM & $0.41 \pm 0.07 (0.60)$ & $0.44 \pm 0.07 (0.69)$ & $0.45 \pm 0.07 (0.64)$ & $0.42 \pm 0.08 (0.66)$  & $0.62 \pm 0.05 (0.75)$ & $0.59 \pm 0.07 (0.72)$\\
\bottomrule
\end{tabular}%
}
\caption{\small Quantitative evaluations for image recovery results. For fair evaluation, we match the data utility (accuracy) for Vanilla and {\mixcon}. SSIM and PSIM are measured on 100 testing samples. Those scores are presented in mean $\pm$ std and worst-case (in parentheses) format. The smaller scores indicate harder data recovery.} 
\vspace{-4mm}
\label{tab:group_similarity}
\end{table}

\subsection{Additional Results on Cifar10}
To further validate our method, we compare the inversion results on Cifar10, training with VGG16 \cite{simonyan2014very} with the default implementation. {\mixcon} is applied to the output of the second convolutional block. The classification loss is a cross-entropy loss. Neural network training uses an SGD optimizer with a momentum of 0.1 and a weight decay of 5e-4. The total training epoch is 300, and the batch size is 300. The initial learning is 0.1, which decreases to 0.01 after the 150th epoch and decreases to 0.001 after the 250th epoch. To match accuracy with vanilla training, we select $\lambda=1$ and $\beta=1e-8$ for {\mixcon}, which achieves 91.70\% accuracy, while the vanilla training achieves 92.03\% accuracy. 

After training the network, we follow the white-box inversion attack approach by learning a decoder network to invert~\cite{b95,db16}. Specifically, we have modified the existing generative adversarial neural network (GAN)~\cite{radford2015unsupervised} by adding the last term in Eq. \ref{eq:attackgan} and aim to generate an input-like image from hidden layer output as a supervised learning task.  
Suppose the target function is $h(x)$ (the trained encoder) and the attacker aims to estimate input of $x$ given its output $z = h(x)$, by solving an optimization problem for the following loss function: 
\begin{align} \label{eq:attackgan}
      L_{\text{attack}} 
      = -\frac{1}{N}\sum_{i=1}^{N} \Big( \log(D(x_i)) + \log(1-D(G(z_i))) 
    + \xi\| x_i -  G(z_i)\|_1 + \xi \| z_i -  \Phi(G(z_i))\|_1 \Big),
\end{align}
where we set $\xi=10$, $D$ denotes discriminator and $G$ denotes generator. For training the attacker, we iteratively optimize $D$ and $G$ for 50 epochs. The batch size is set as 128. We use Adam optimizers with a learning rate of 0.002. We show the visualization results in Figure \ref{fig:cifar} and quantitative evaluation in Table \ref{tab:cifar_res}. Figure \ref{fig:cifar} shows the recovered image using {\mixcon} training strategy is much harder to recognize and less perceptually similar to the original images when comparing to vanilla training. The lower scores of {\mixcon} in Table \ref{tab:cifar_res} attest our findings.
\begin{figure}[h]
    \centering
    \includegraphics[width=\linewidth]{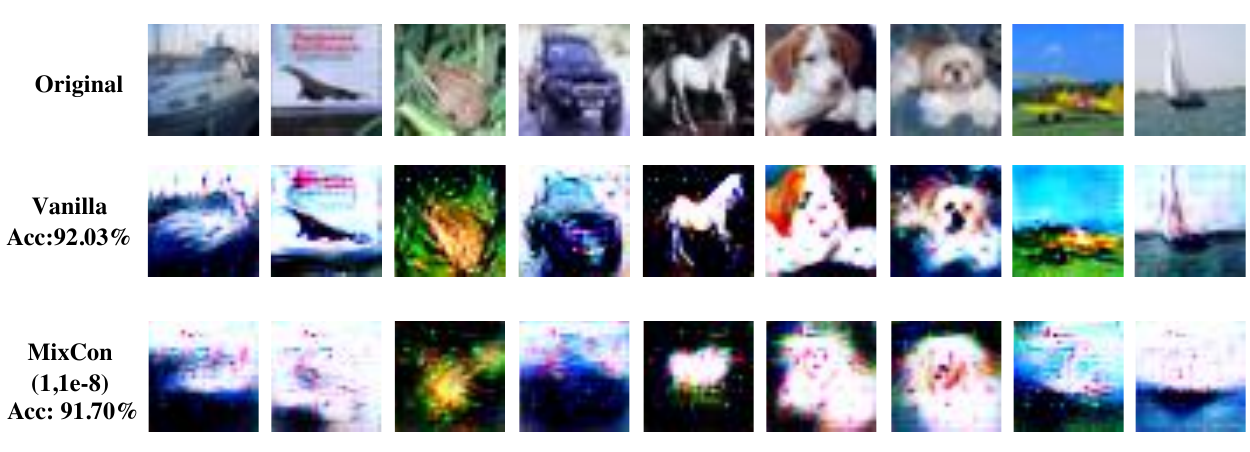}
    \caption{Qualitative evaluation for image inversion results.}
    \label{fig:cifar}
\end{figure}

\begin{table}[ht]
\centering
\begin{tabular}{l|cc|cc}
\hline
 & \multicolumn{2}{c|}{SSIM} & \multicolumn{2}{c}{PSIM} \\ \cline{2-5} 
 & mean $\pm$ std & worst & mean $\pm$ std & worst \\ \hline
\multicolumn{1}{c|}{Vanilla} & 0.53 $\pm$ 0.15 & 0.9418 & 0.92  $\pm$ 0.02 & 0.98 \\
\multicolumn{1}{c|}{{\mixcon}} & 0.33 $\pm$ 0.15 & 0.66 & 0.85 $\pm$ 0.03 & 0.94 \\ \hline
\end{tabular}%
\caption{Quantitative evaluations for image recovery results.  For fair evaluation, we match the data utility(accuracy) for Vanilla and {\mixcon}. SSIM and PSIM are measured on 100 testing samples. Lower scores indicate harder to invert.}
\label{tab:cifar_res}
\end{table}

\end{document}